\documentclass{article}




    \usepackage[final]{neurips_2021}


\usepackage[utf8]{inputenc} 
\usepackage[T1]{fontenc}    
\usepackage{hyperref}       
\usepackage{url}            
\usepackage{booktabs}       
\usepackage{amsfonts}       
\usepackage{nicefrac}       
\usepackage{microtype}      
\usepackage{xcolor}         
\usepackage{smile}
\usepackage{wrapfig}
\usepackage{graphicx}[dvips]
\usepackage{grffile}

\usepackage{amsmath,amssymb}
\usepackage{algorithm}
\usepackage{algorithmic}
\usepackage{natbib}

\def\##1\#{\begin{align}#1\end{align}}
\def\$#1\${\begin{align*}#1\end{align*}}

\newcommand{\lin}[1]{\textcolor{red}{[Lin: #1]}}
\newcommand{\han}[1]{[\textcolor{cyan}{Han: #1}]}

\setcitestyle{authoryear,round}

\title{Breaking the Moments Condition Barrier: No-Regret Algorithm for Bandits with Super Heavy-Tailed Payoffs}

%

\author{%
Han Zhong \\
Peking University \\
\texttt{hanzhong@stu.pku.edu.cn}
   \And
   Jiayi Huang \\
   Peking University and Pazhou Lab\\
   \texttt{jyhuang@stu.pku.edu.cn} \\
   \AND
   Lin F. Yang \thanks{Corresponding author.} \\
   University of California, Los Angles \\
   \texttt{linyang@ee.ucla.edu} \\
   \And
   {Liwei Wang} \footnotemark[1]  \\
   Peking University \\
   \texttt{wanglw@cis.pku.edu.cn} \\
}

\begin{document}

\maketitle
\begin{abstract}
  Despite a large amount of effort in dealing with heavy-tailed error in machine learning, little is known when moments of the error can become non-existential: the random noise $\eta$ satisfies Pr$\left[|\eta| > |y|\right] \le 1/|y|^{\alpha}$ for some $\alpha > 0$.  
  We make the first attempt to actively handle such super heavy-tailed noise in bandit learning problems:  We propose a novel robust statistical estimator, mean of medians, which estimates a random variable by  computing the empirical mean of a sequence of empirical medians. 
  We then present a generic reductionist algorithmic framework for solving bandit learning problems (including multi-armed and linear bandit problem): the mean of medians estimator can be applied to nearly any bandit learning algorithm as a black-box filtering for its reward signals and obtain similar regret bound as if the reward is sub-Gaussian. We show that the regret bound is near-optimal even with very heavy-tailed noise. We also empirically demonstrate the effectiveness of the proposed algorithm, which further corroborates our theoretical results. 
\end{abstract}
  

\section{Introduction}

Multi-armed bandit (MAB) problems have been introduced by Robbins \citep{robbins1952some}, and have since become a standard model for modeling sequential decision-making problems.
In an MAB instance, there are finite number of arms, pulling each of which an agent receives a random reward (payoff) with unknown distribution. An
agent then aims to maximize the received rewards by pulling the arms strategically for a number of times.
The MAB problems frequently arise in practice: e.g., clinical trials and online advertising. 
When the number of arms becomes infinite in practice, 
the stochastic linear bandit \citep{abe1999associative,auer2002using,dani2008stochastic} generalizes the classical
MAB by assuming the underlying reward distribution possesses a linear structure. 
Linear bandits achieve tremendous success such as online advertisement
and recommendation systems \citep{li2010contextual,chu2011contextual,li2016collaborative}. 

Due to the online learning nature of a bandit problem, we measure the  performance  of an agent via regret, which measures the differences of the rewards collected from the best arm to those collected from the agent.
When the reward distribution is benign, e.g., with sub-Gaussian tails\footnote{For any $\zeta > 0$, a random variable $X$ is said to be $\zeta$-sub-Gaussian if it holds that $\mathbb{E}[e^{t(X - \mathbb{E}[X])}] \leq e^{\zeta^2 t^2/2}$ for any $t > 0$.}, there are a number of efficient algorithms \citep{bubeck2012regret,lattimore2020bandit}, which obtain a worst-case regret bound of the form $\tilde{O}(\sqrt{AT})$\footnote{$\tilde{O}(\cdot)$ ignores logarithm factors.},
where $A$ is the number of arms and $T$ is the total number of arm pulls.
Any algorithm with regret bound sublinear in $T$ is effectively learning as its average regret tends to $0$ when $T\to \infty$.
In the linear setting, a number of results \citep{dani2008stochastic,abbasi2011improved} 
achieve $\tilde{O}(\mathrm{poly}(d)\sqrt{T})$ 
regret bounds, eliminating the dependence on $A$. Here $d$ is the ambient dimension of the problem. A related performance measure is about the number of pulls to identify the best arm \citep{soare2014best, tao2018best,jedra2020optimal}.
We note that usually a regret minimization algorithm can be converted to identify the best arm with high probability.

Nevertheless, in many practical scenarios, we encounter non-sub-Gaussian noises in the observed payoffs, e.g., the price fluctuations in financial markets \citep{cont2000herd,rachev2003handbook,hull2012risk}, and/or fluctuations of neural oscillations \citep{roberts2015heavy}. 
In such scenarios, the previously mentioned algorithms may fail.
To tackle this problem, \citet{bubeck2013bandits} makes the first attempt to study the stochastic MAB problem with heavy-tailed noise.
Specifically, for the rewards with a finite second order moment, by utilizing more robust statistical estimators, \citet{bubeck2013bandits} achieves regret bound of the same order as in the bounded/sub-Gaussian loss setting. 
Then, \citet{medina2016no,shao2018almost,xue2020nearly} study the heavy-tailed linear bandits. They consider a general characterization of heavy-tailed payoffs in bandits, where the reward takes the form $r=\mu+\eta$, where $\mu$ is an unknown but fixed number and $\eta$ is a random noise, whose distribution has a finite moment of order $1 + \epsilon$. Here $\epsilon \in (0, 1]$. For this setting, they establish a sublinear regret bound $\tilde{\cO}(T^{\frac{1}{1+\epsilon}})$. Unfortunately, when $\epsilon = 0$, these regret bounds will be linear in $T$,
failing to learn in such situations.
To further account for many such real-world scenarios, where the payoff noise has super-heavy tails (e.g., only $(1+\epsilon)$-th moment for $\epsilon\in(0,1)$ exists, or Cauchy distribution whose mean does not exist), new algorithms need to be developed:

\begin{center}
    \textit{Can we design an efficient algorithm that provably learns for bandits \\with super heavy-tailed payoffs?}
\end{center} 

In this paper, we give the affirmative answer to this question. Without loss of generality,  we consider the  linear setting, which includes MAB as a special case.
In this setting, each arm is viewed as a vector in $\RR^d$. The random reward of the arm $x$ is specified as $\theta^\top x + \eta$, where $\theta \in \RR^d$ is an unknown but fixed vector and $\eta$ is a super heavy-tailed symmetric random noise such that $\Pr(|\eta| > y) \le 1/y^\alpha$ for any $y > 0$ and some $\alpha>0$. 
One of the key challenges in this setting is
that the mean of $y$ may not exist.
The previous robust mean estimators \citep{bubeck2013bandits}, such as truncated empirical mean and median of means
\citep{bubeck2013bandits,medina2016no,shao2018almost,xue2020nearly} which require the estimation of the mean, cannot effectively handle this super heavy-tailed noise.
On the other hand, since the mean does not exist, we are also required to measure the performance of the agent with high-probability pseudo-regret (defined in Section~\ref{sec:pre:linear:bandits}).
To tackle these challenges, 
we propose a novel estimator: mean of medians.  We then present a generic algorithmic framework to apply it in any existing bandit algorithm.
Below, we summarize our contributions:


\begin{itemize}
    \item We propose a novel robust statistical estimator: mean of medians. Specifically, we simply split $\tilde{n}$ samples into $k$ blocks and takes the mean of the median in each block. Theoretically, we can prove that, by utilizing $\tilde{n}(\alpha)$ samples, the super heavy-tailed noise is reduced to the bounded noise with high probability. Here $\tilde{n}(\alpha)$ is a constant which depends on $\alpha$.
    \item For the super heavy-tailed linear bandits, we propose a new algorithmic framework. In detail, by simply combing the above mean of medians estimator and an arbitrary provably efficient bandit algorithm, we obtain a new algorithm that can be proved efficient for regret minimization problems and best arm identification problems. 
    Our obtained sample bounds and regret bounds can be nearly optimal.
    \item We instantiate our framework with Student's $t$-noises and compare with previous methods. Our experiments demonstrate our method can significantly outperform existing algorithms in these environments, and is strictly consistent with our theoretical guarantees.
\end{itemize}


\subsection{Related Works}

A line of recent work \citep{bubeck2013bandits,medina2016no,shao2018almost,xue2020nearly} on the heavy-tailed MAB or linear bandits uses the truncated empirical mean and median of means as the robust estimators. However, without assuming the finite moments of order $1 + \epsilon$ for some $\epsilon \in (0, 1]$, it remains unclear whether one can attain equivalent regret/sample complexity for the more heavy-tailed setting, e.g, the noise of the payoff follows a Student's $t$-distribution, or whether sublinear regret algorithms of any form are even possible at all. In comparison, by incorporating the mean of medians estimator, we can design a general provably efficient algorithmic framework for the super heavy-tailed linear bandits.

Our work also adds to the vast body of existing literature on the regret minimization problem on linear bandits \citep{auer2002using,dani2008stochastic,rusmevichientong2010linearly,abbasi2011improved,lattimore2017end,combes2017minimal}. A remarkable analysis is given by \citet{auer2002using} who builds confidence regions for the true model parameter and then optimistically selects the action minimizing the loss over these sets. In the theoretical view, for the setting where the arm set is finite, \citet{auer2002using} establishes a $\tilde{\cO}(\sqrt{dT})$ regret bound. Then, \citet{dani2008stochastic,rusmevichientong2010linearly,abbasi2011improved} construct the confidence ellipsoids for the setting where the arm set is infinite and establish a $\tilde{\cO}(d\sqrt{T})$ regret bound.

Our work is also closely related to another line of work \citep{soare2014best,soare2015sequential,garivier2016optimal,tao2018best,xu2018fully,fiez2019sequential,jedra2020optimal} on the best arm identification problem for linear bandits. This problem is also referred as pure exploration problem since there is no price to be paid for exploring and thus we don't need to carefully balance exploration against exploitation. Hence, an algorithm that is optimal for the best arm identification problem may be suboptimal for the regret minimization since its exploration is too aggressive. The reverse is also true because the exploration for the regret minimization algorithm might be too slow.

\section{Preliminaries}
\subsection{Notation}
For a positive integer $K$, we use $[K]$ to denote $\{1, 2, \cdots, K\}$.
For $r \in \RR$, its absolute value is $|r|$, its ceiling integer is $\lceil r \rceil$, and its floor integer is $\lfloor r \rfloor$. Also, let $1_{\{\cdot\}}$ be the indicator function. 

\subsection{Linear Bandits} \label{sec:pre:linear:bandits}

We consider a linear bandit problem specified by a tuple $(\cX, \theta)$, where $\cX$ is the set of arms, and $\theta\in\RR^d$ is an unknown but fixed parameter with $\|\theta\|_2\le 1$. Without loss of generality, we assume that $\cX\subset \RR^d$ and $\|x\| \le 1$ for any $x \in \cX$. 
At each round $t$, a learner chooses an action $x_t \in \cX$ and observes the reward $r_t = \theta^\top x_t + \eta_t$, where $\eta_t$ is a symmetric and independent noise.
We denote $x^* \in \argmax_{x \in \cX} \theta^\top x$ as an optimal arm. 
Note that the linear setting includes the MAB problem as a special case. In the MAB setting, each arm $a\in \cX$ is corresponding to a $|\cX|$-dimensional standard unit vector (all entries are 0 except for the $a$-th entry).

In the online setting, i.e., an agent interacts with the bandit instance for at most $T\ge 1$ rounds,
the performance of the agent is measured by the cumulative (pseudo) regret, which is defined by 
\$
\text{Regret}(T) = \sum_{t = 1}^T (\theta^\top x^* - \theta^\top x_t).
\$
In order to obtain a small regret, the agent needs to figure out a vector close to $\theta$, even with the presence of the noise $\eta_t$.
A related performance measure is the sample complexity of identifying a near-optimal arm.
In this case, an agent is asked to output an arm $\hat{x}$ such that 
\[
|\theta^\top x^* - \theta^\top \hat{x}|\le \epsilon
\]
with probability at least $1-\delta$. The sample complexity of the algorithm is measured by the smallest number of steps the agent interacts with the bandit instance.
If $\text{Regret}(T)$ of an algorithm is sublinear, then  we can use standard technique to covert it into an algorithm that outputs an $\epsilon$-accurate arm in time $T$ such that $\text{Regret}(T)/T=O(\epsilon)$.
When $\epsilon=0$, the problem reduces to the \emph{best arm identification} problem.
Note that best arm  identification is usually not possible for a continuous action space $\cX$. 

\subsection{Super Heavy-Tailed Noise}
\label{sec:def}
We now introduce the concept of characterizing the tail of the noise $\eta$. 
\begin{definition}[$\alpha$-heavy-tail]\label{def:heavy:tailed}
We say a random variable, $\eta$, has $\alpha$-heavy-tail for some $\alpha>0$, if $\alpha$ is the largest positive real number such that for all $y>0$, we have
$\Pr(|\eta| > y) \le \frac{1}{y^\alpha}$.
A linear bandit instance, $(\cX, \theta)$, is called with \emph{$\alpha$-heavy-tail noise}, if for any arm $x \in \cX$,   its reward is a random variable $\theta^\top x + \eta$, for some \emph{symmetrical} $\alpha$-heavy tail noise $\eta$.
\end{definition}
\begin{remark}
For the random variable $\eta'$ such that $\Pr(|\eta'| > y) \le \frac{c}{y^\alpha}$ for some absolute constant $c > 0$, we have $\eta = \frac{\eta'}{c^{1/\alpha}}$ satisfies that $\Pr(|\eta| > y) \le \frac{1}{y^\alpha}$. Therefore, for ease of presentation, we assume $\Pr(|\eta| > y) \le \frac{1}{y^\alpha}$ here.
\end{remark}

\begin{wrapfigure}{r}{0.6\textwidth} 
\vspace{-20pt}
  \begin{center}
    \includegraphics[width=0.5\textwidth]{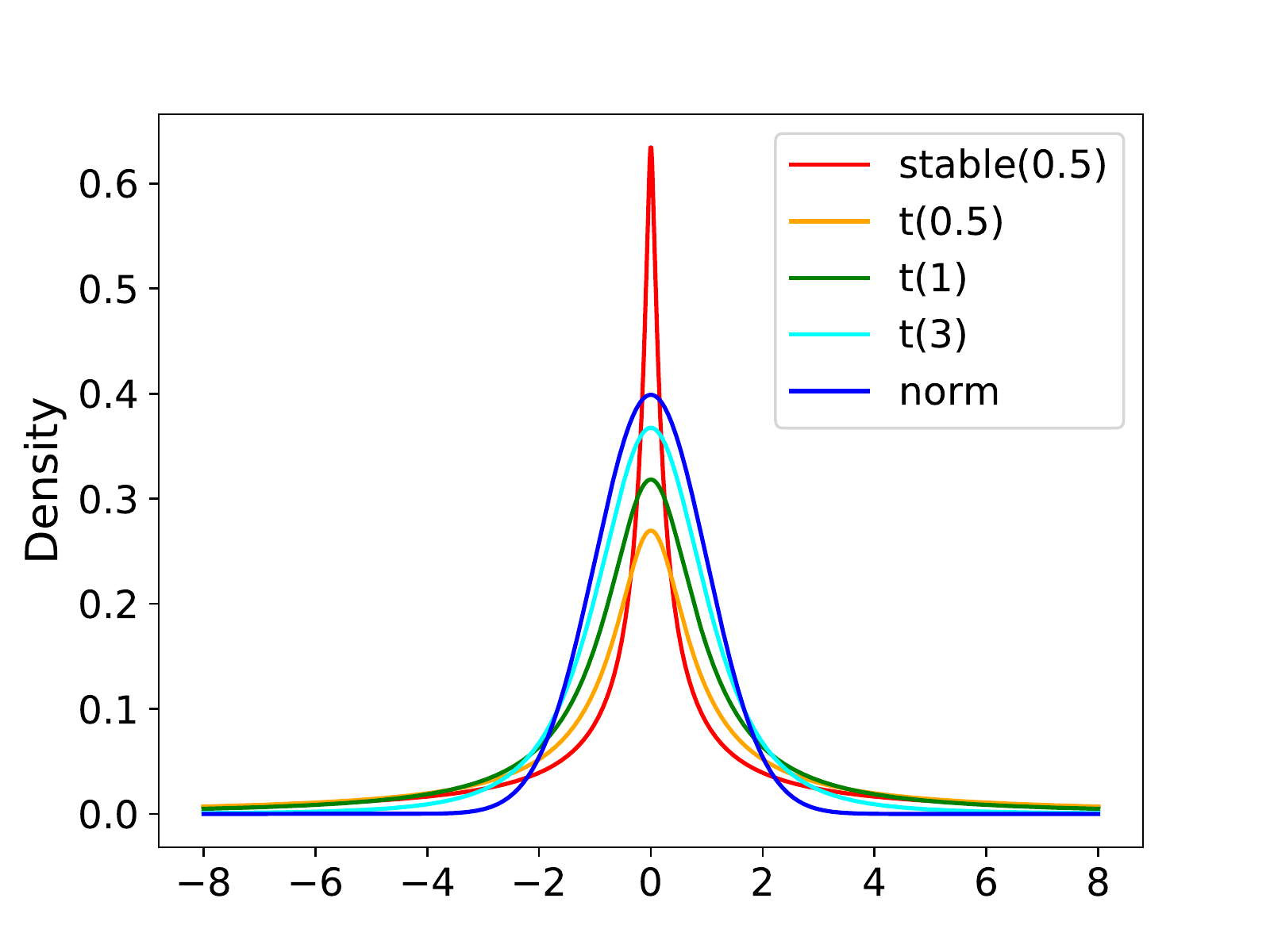}
    \caption{Probability density function (PDF) of normal distribution, 0.5-stable distribution, and Student's $t$-distribution with df = 0.5, 1, and 3.}
    \label{fig:heavy-tailed-rv}
  \end{center}
  \vspace{-20pt}
  \vspace{1pt}
\end{wrapfigure}

Note that the smaller the $\alpha$, the heavier the tail. Figure~\ref{fig:heavy-tailed-rv} shows several examples of the $\alpha$-heavy-tail noise. 
In particular, for a Cauchy random (Student's $t$-distribution with $\text{df} = 1$.) variable $\eta$ with PDF, $f(x) = \frac{1}{\pi} \cdot  \frac{1}{1+x^2}$, $\alpha=1$;
for a Student's $t(\text{df})$-distribution with PDF, $f(x) = \frac{1}{\sqrt{\text{df}}\cdot \text{B}(\frac{1}{2}, \frac{\text{df}}{2})} \cdot (1 + \frac{x^2}{\text{df}})^{- \frac{\text{df + 1}}{2}}$, $\alpha = \text{df}$.
The asymptotic behavior of $\alpha$-stable distribution is described by $f(x) \sim \frac{1}{|x|^{\alpha + 1}}$, which shows that $\alpha$-stable distribution also has the $\alpha$-heavy tail.
We also point out that normal distribution is not $\alpha$-heavy tail for any finite $\alpha$.


Note that the symmetry assumption in the noise distribution is necessary. 
For $\alpha<1$, the mean of $\eta$ is not necessarily existential. Hence,  symmetry is needed for the problem to be well-defined. 
We note that for $\alpha>1$, we can relax the assumption of symmetry by simply assuming $\EE[\eta]=0$.
For the sake of presentation, we assume symmetry through out. 

In comparison with existing literature, we require no assumptions on the bound or sub-Gaussianity of $\eta$, e.g., in \citep{abbasi2011improved}. Meanwhile, we also impose no restrictions on the existence of the $(1 + \epsilon)$-th moment of the noise, which is required in \citet{bubeck2013bandits,medina2016no,shao2018almost,xue2020nearly}. 
Indeed, the $\alpha'$-th moment of $\eta$ for any $\alpha'< \alpha$ does exist. However, when $\alpha\le 1$,  previous algorithms fail to guarantee a sub-linear regret bound.

\section{Robust Estimator for Super Heavy-tailed Noises}
To introduce our estimator, we first briefly review previous robust estimators and provide some motivations of designing a new robust estimator. 
For rest of the section, we consider i.i.d. copies of random variable $X=x + \eta$, where $x$ is a fixed number and $\eta$ is a symmetric noise.
Consider a sequence of $\tilde{n}$ i.i.d. copies of $X$: $X_1, X_2, \cdots, X_{\tilde{n}}$.
There are two notable robust estimators for estimating $x$ in the heavy-tail setting. 

The first robust estimator is the \emph{truncated empirical mean} estimator,  selects the random variables with magnitude smaller than certain threshold $c> 0$ and computes the empirical mean after selection: 
    \begin{align}
     \label{eqn:temt}  
    \hat{\mu}_1 = \frac{1}{\tilde{n}}\sum_{i = 1}^{\tilde{n}} X_i 1_{\{|X_i| \le c\}}.
    \end{align}

The second one is called the \emph{median of means}, which is defined as follows, 
    \begin{align}
    \label{eqn:momms}  
	\hat{\mu}_2 = \text{median} \Bigl(\Big\{ \frac{1}{k}\sum_{i = 1}^{k} X_{(j - 1)k + i} \Big\}_{j = 1}^{k'}\Bigr) ,
	\end{align}
where $k$ is a parameter to be decided.

Unfortunately, as shown in \citet{bubeck2013bandits}, 
these two robust estimators critically rely on the existence of mean of the random noise, and hence cannot be applied to super heavy-tailed noise with no mean. 
Indeed, the truncated random variable can significantly distort the center whereas the median of means estimator does not concentrate enough.
We illustrate in more detail in  Section~\ref{sec:comparison} about their performance.
Thus, for $\alpha$-heavy-tail random variables with $\alpha<1$,  new robust estimators are needed to effectively handle such noise.

\subsection{Mean of Medians}
The fundamental reason of the failure of the above estimators is due to the requirement of the existence of mean in the noise. To resolve this issue, we propose to use the empirical median, which is super robust against heavy tails as it characterizes the properties of the distribution, instead of moments. For instance, no matter how heavy the tail of the noise is, as long as it has certain probability of being close to $0$, the median will have  high probability being close to $0$.

To leverage the above observation, we propose a novel statistical estimator: mean of medians (mom). Specifically, we split the $\tilde{n}$ samples into $k'$ blocks and takes the mean of the median in each block. For $\tilde{n}$ i.i.d. symmetric random variables $X_1, X_2, \cdots, X_{\tilde{n}}$, we define the mean of medians estimator by 
\# \label{eq:def:MOM}
X_{\text{mom}} = \frac{1}{k'}\sum_{j = 1}^{k'} \text{median}(X_{(j-1)k+1}, X_{(j-1)k+2}, \cdots, X_{jk}),
\#
where $k = \lceil \tilde{n}^\varepsilon \rceil$ and $k' = \lfloor \tilde{n}/k \rfloor$. Here $\varepsilon \in (0, 1)$ is a parameter depends $\epsilon$ and will be specified later.

\subsection{Theoretical Guarantees of Mean of Medians Estimator}

In this subsection, we provide the theoretical guarantees for the mean of medians estimator. First, we have the following lemma, which characterizes the median of heavy-tailed noises.

\begin{lemma} \label{lemma:median}
   Let $X_1, X_2, \cdots, X_m$ be $m$ i.i.d. symmetric random variables satisfying $\Pr(|X_i| > y) \le \frac{1}{y^\alpha}$. Suppose $Y$ is the median of $X_1, X_2, \cdots, X_m$, we have
   \$
   \Pr(|Y| \le 4^{1/\alpha}) \ge 1 - 2e^{-m/8}.
   \$
\end{lemma}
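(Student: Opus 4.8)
The plan is to reduce the statement to a concentration bound on the number of samples that land in the tail beyond the threshold $c := 4^{1/\alpha}$. First I would note that this choice of $c$ is exactly what makes the relevant tail small: the $\alpha$-heavy-tail hypothesis gives $\Pr(|X_i| > c) \le c^{-\alpha} = 1/4$, and since each $X_i$ is symmetric, $\Pr(X_i > c) = \Pr(X_i < -c) = \tfrac12 \Pr(|X_i| > c) \le 1/8$.

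Next I would decompose the bad event. Because $c > 0$, I can write $\{|Y| > c\} = \{Y > c\} \cup \{Y < -c\}$ and bound the two tails separately, so that the final factor of $2$ comes from a union bound rather than from any delicate property of the median. The crucial deterministic observation is a median-to-count reduction: if the median $Y$ exceeds $c$, then at least half of the samples must exceed $c$. Writing $N_+ = |\{ i : X_i > c \}|$ and $N_- = |\{ i : X_i < -c \}|$, this says $\{Y > c\} \subseteq \{N_+ \ge m/2\}$ and symmetrically $\{Y < -c\} \subseteq \{N_- \ge m/2\}$.

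It then remains to control $\Pr(N_+ \ge m/2)$ (and identically $\Pr(N_- \ge m/2)$). Here $N_+$ is a sum of $m$ i.i.d. Bernoulli variables with success probability $p = \Pr(X_i > c) \le 1/8$, so $\mathbb{E}[N_+] = mp \le m/8$; on the event $\{N_+ \ge m/2\}$ the deviation $N_+ - mp$ is at least $m/2 - m/8 = 3m/8$, and Hoeffding's inequality gives $\Pr(N_+ \ge m/2) \le e^{-2(3m/8)^2/m} = e^{-9m/32} \le e^{-m/8}$. A union bound over the two tails then yields $\Pr(|Y| > c) \le 2e^{-m/8}$, i.e. $\Pr(|Y| \le 4^{1/\alpha}) \ge 1 - 2e^{-m/8}$, as claimed.

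I do not expect a serious obstacle; the argument is a clean symmetry-and-concentration computation, and the slack in the target exponent ($e^{-m/8}$ versus the sharper $e^{-9m/32}$, or the even sharper Chernoff/KL rate) leaves ample room. The only point that needs a little care is the median-to-count inclusion $\{Y > c\} \subseteq \{N_+ \ge m/2\}$, whose precise form (strict versus non-strict, and the treatment of ties) depends on the convention chosen for the median of an even-length list; this is handled by elementary order-statistic bookkeeping and does not affect the constants above.
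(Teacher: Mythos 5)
Your proof is correct and follows essentially the same route as the paper's: reduce the event $\{|Y|>4^{1/\alpha}\}$ to the event that at least $m/2$ samples fall beyond the threshold, apply Hoeffding's inequality to the resulting Bernoulli count, and finish with a union bound over the two tails. The only (harmless) difference is that you exploit symmetry to work with the one-sided probability $\Pr(X_i>c)\le 1/8$, which gives the slightly sharper intermediate exponent $e^{-9m/32}$, whereas the paper uses the two-sided count with $\Pr(|X_i|>c)\le 1/4$ and lands directly on $e^{-m/8}$.
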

\begin{proof}
	First, we define the random variables $\hat{X_i} = 1_{\{|X_i| > 4^{1/\alpha}\}}$ for $i \in [m]$. By the fact that $\Pr(|X_i| > y) \le \frac{1}{y^\alpha}$, we have $p_i = \Pr(\hat{X_i} = 1) \le 1/4$. Together with Hoeffding's inequality, we obtain 
	\$
	\Pr\left(\sum_{i = 1}^m \hat{X}_i \ge m/2\right) \le \Pr\left(\sum_{i = 1}^m \hat{X}_i - p_i \ge m/4\right) \le e^{-m/8}.
	\$
	Note that the median $Y$ satisfies $Y > 4^{1/\alpha}$ if and only if at least half of the estimates $X_i$ are above $4^{1/\alpha}$, which is equivalent to $\Pr(\sum_{i = 1}^m \hat{X}_i \ge m/2)$.
	Hence, we have $Y > 4^{1/\alpha}$ with probability at most $e^{-m/8}$. Similarly, we can obtain that $Y < 4^{1/\alpha}$ with probability at most $e^{-m/8}$. Thus, it holds that 
	\$
	\Pr(|Y| > 4^{1/\alpha}) \le 2e^{-m/8},
	\$
	which concludes the proof of Lemma \ref{lemma:median}.
\end{proof}

Lemma \ref{lemma:median} shows that, when the number of samples $m$ is large, the median of $m$ i.i.d. super heavy tailed noise random variables is bounded by $4^{1/\alpha}$ with high probability. Equipped with this lemma, we formally describe the main results for the mean of medians estimator as follows.
\begin{theorem} \label{thm:mom}
	Let $\varepsilon,\alpha>0$ be parameters. Let $X_1, X_2, \cdots, X_{\tilde{n}}$ be $\tilde{n}$ i.i.d. symmetric  $\alpha$-heavy-tail random variables. When $\tilde{n} \ge \max\{C, \big(16\log(2/\delta)\bigr)^{1/\varepsilon}\}$, for the mean of medians estimator defined in \eqref{eq:def:MOM}, we have
	\$
	\EE[X_{\text{mom}}] = 0, \qquad \text{and}\quad |X_{\text{mom}}| \le \sqrt{\frac{2\cdot4^{2/\alpha}}{\tilde{n}^{1 - \varepsilon}} \cdot \log(4/\delta)}
	\$ 
	with probability $1 - \delta$. Here $C$ is a constant depending on $\varepsilon$ such that $2C^{1-\varepsilon}e^{-C^\varepsilon/16} \le 1$. 
	\end{theorem}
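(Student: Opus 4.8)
The plan is to reduce everything to an average of i.i.d., bounded, mean-zero random variables and then invoke Hoeffding's inequality. Write $Y_j = \text{median}(X_{(j-1)k+1},\dots,X_{jk})$ for $j\in[k']$, so that $X_{\text{mom}} = \frac1{k'}\sum_{j=1}^{k'} Y_j$. Because the blocks are disjoint the $Y_j$ are i.i.d., and because each $X_i$ is symmetric while the median anti-commutes with negation, $\text{median}(-x_1,\dots,-x_k) = -\text{median}(x_1,\dots,x_k)$, each $Y_j$ has the same law as $-Y_j$.

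For the first claim $\mathbb{E}[X_{\text{mom}}]=0$, by the symmetry just noted it suffices to check $\mathbb{E}|Y_j|<\infty$. Even though the $X_i$ themselves may have no mean, the same binomial/Hoeffding argument behind Lemma~\ref{lemma:median}, run at a general level $t$ instead of $4^{1/\alpha}$, shows that $\Pr(|Y_j|>t)$ decays exponentially in $k$ for $t\ge 4^{1/\alpha}$ and is therefore integrable; hence $\mathbb{E}|Y_j|<\infty$, so $\mathbb{E}[Y_j]=0$ by symmetry and $\mathbb{E}[X_{\text{mom}}]=0$ by linearity.

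For the high-probability bound I would clip each median, $\bar Y_j := \max(-4^{1/\alpha},\min(4^{1/\alpha},Y_j))$. Clipping is an odd map, so the $\bar Y_j$ remain i.i.d., symmetric (hence mean zero), and now lie in $[-4^{1/\alpha},4^{1/\alpha}]$. Hoeffding's inequality for this bounded mean-zero average gives $\Pr(|\frac1{k'}\sum_j\bar Y_j|>s)\le 2\exp(-k's^2/(2\cdot4^{2/\alpha}))$, and choosing $s=\sqrt{2\cdot4^{2/\alpha}\log(4/\delta)/k'}$ makes this at most $\delta/2$. It then remains to undo the clipping: on the event $E=\{|Y_j|\le4^{1/\alpha}\ \forall j\}$ we have $\bar Y_j=Y_j$ for every $j$, so $X_{\text{mom}}$ coincides with the clipped average there. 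By Lemma~\ref{lemma:median} and a union bound $\Pr(E^c)\le 2k'e^{-k/8}$, and intersecting $E$ with the Hoeffding event yields the stated bound with total failure probability $\delta$ (after substituting $k' = \Theta(\tilde n^{1-\varepsilon})$ into $s$).

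The step needing the most care — and the reason both lower bounds on $\tilde n$ appear in the hypothesis — is showing $\Pr(E^c)\le\delta/2$. Using $k'\le \tilde n^{1-\varepsilon}$ and $k\ge\tilde n^\varepsilon$, I would bound $2k'e^{-k/8}\le 2\tilde n^{1-\varepsilon}e^{-\tilde n^\varepsilon/8} = \bigl(2\tilde n^{1-\varepsilon}e^{-\tilde n^\varepsilon/16}\bigr)\,e^{-\tilde n^\varepsilon/16}$. The requirement $\tilde n\ge C$ with $2C^{1-\varepsilon}e^{-C^\varepsilon/16}\le1$ exactly controls the polynomial prefactor coming from the union bound over the $k'$ blocks, while $\tilde n\ge(16\log(2/\delta))^{1/\varepsilon}$ forces $e^{-\tilde n^\varepsilon/16}\le\delta/2$; together these give $\Pr(E^c)\le\delta/2$. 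The one piece of genuinely delicate bookkeeping is the interplay between the ceiling/floor in $k,k'$ and the claimed denominator $\tilde n^{1-\varepsilon}$: since $k'\le\tilde n^{1-\varepsilon}$, the honest Hoeffding bound naturally carries $k'$, so one should track the constant when replacing $k'$ by $\tilde n^{1-\varepsilon}$ in the final expression.
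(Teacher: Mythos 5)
Your proposal is correct and follows essentially the same route as the paper's own proof: block medians bounded via Lemma~\ref{lemma:median}, truncation/clipping to $[-4^{1/\alpha},4^{1/\alpha}]$, Hoeffding on the bounded average, and a union bound over the $k'$ blocks controlled by the two lower bounds on $\tilde n$. If anything you are more careful than the paper on two points it glosses over --- justifying $\EE[X_{\text{mom}}]=0$ via integrability of the median's exponentially decaying tail rather than by symmetry alone, and flagging the constant-factor slack incurred when replacing $k'$ by $\tilde n^{1-\varepsilon}$ in the final Hoeffding exponent.
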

	
	\begin{proof}
		To facilitate our analysis, we denote the median of $(X_{(j-1)k+1}, X_{(j-1)k+2}, \cdots, X_{jk})$ by $Y_j$. Here $k = \lceil \tilde{n}^\varepsilon \rceil$ and $k' = \lfloor \tilde{n}/k \rfloor$. Under this notation, by Lemma \ref{lemma:median}, we obtain that 
		\# \label{eq:2001}
		\Pr(|Y_{j}| \le 4^{1/\alpha}) \ge 1 - 2e^{-k/8}
		\#
		for any $j \in [k']$. Let $Z_j = Y_j  \cdot {\bf 1}_{\{|Y_j| \le 4^{1/\alpha}\}}$, we have
		\$
		\Pr(|X_{\text{mom}}| > t) &= \Pr\Bigl(\Big|\frac{1}{k'}\sum_{j = 1}^{k'} Y_j \Big| > t\Bigr) \\
		&\le \Pr\Bigl(\Big|\frac{1}{k'}\sum_{j = 1}^{k'} Z_j\Big| > t\Bigr) + \sum_{j = 1}^{k'} \Pr(|Y_j| > 4^{1/\alpha}) \\
		& \le 2 \exp\Big(-\frac{k't^2}{2\cdot4^{2/\alpha}}\Big) + 2k'e^{-k/8},
		\$
		where the last inequality follows from Hoeffding's inequality and Equation \eqref{eq:2001}. When choosing 
		\$
		\tilde{n} \ge \max\{C, \big(16\log(2/\delta)\bigr)^{1/\varepsilon} \},
		\$
		where $C$ is a sufficient large constant depending on $\varepsilon$ such that $2C^{1-\varepsilon}e^{-C^\varepsilon/16} \le 1$, we have that  $2k'e^{-k/8} \le \delta/2$.
		Hence, by setting $t = \sqrt{\frac{2\cdot4^{2/\alpha}}{\tilde{n}^{1 - \varepsilon}} \cdot \log(4/\delta)}$, we have 
		\$
		|X_{\text{mom}}| \le \sqrt{\frac{2\cdot4^{2/\alpha}}{\tilde{n}^{1 - \varepsilon}} \cdot \log(4/\delta)}
		\$
		with probability at least $1 - \delta$. Together with the fact that the noise is symmetric, we conclude the proof of Theorem \ref{thm:mom}.
	\end{proof}
    
\begin{remark}[Sample Complexity]
 Solving the inequality $|X_\text{mom}| \le \zeta$ gives that $\tilde{n} \ge \bigl(\frac{2 \cdot 4^{2/\alpha}}{\zeta^2} \cdot \log(4/\delta)\bigr)^{\frac{1}{1 - \varepsilon}}$. Together with the constraint that $\tilde{n} \ge \max\lbrace C, (16\log(2/\delta))^{1/\varepsilon}\rbrace$, we have $\tilde{n} \ge \max\lbrace C, (16\log(2/\delta))^{1/\varepsilon}, \bigl(2 \cdot \frac{4^{2/\alpha}}{\zeta^2} \cdot \log(4/\delta)\bigr)^{\frac{1}{1 - \varepsilon}}\rbrace$. If we choose $\varepsilon$ near to $0$, $C$ and  $(16\log(2/\delta))^{1/\varepsilon}$ are large. If we choose $\varepsilon$ near to $1$, $\bigl(\frac{2 \cdot 4^{2/\alpha}}{\zeta^2} \cdot \log(4/\delta)\bigr)^{\frac{1}{1 - \varepsilon}}$ is large. In other words, there is a trade-off in $\varepsilon$ to balance these three terms. 
\end{remark}

\subsection{Comparison with Previous Robust Estimators} \label{sec:comparison}
Coming back to the other robust estimators, truncated empirical mean and median of means have good guarantees when the mean of the random variables exist.
Let us consider $\tilde{n}>0$ i.i.d. random variables $\{X_1, X_2, \cdots, X_{\tilde{n}}\}$ and follow the notation in \cite{bubeck2013bandits}.
Specifically, suppose $X_i$ satisfies that 
\$
\EE [X_i] = \mu, \quad \text{and} \quad\EE[ |X_i - \mu|^{1+\epsilon}] \le v, \quad \forall i\in[\tilde{n}]
\$
for some $\epsilon \in (0, 1)$. 
\citet{bubeck2013bandits} shows that, when we set $c$ and $k$ properly in \eqref{eqn:temt} or \eqref{eqn:momms}, we have 
\# \label{eq:rate}
|\mu - \hat{\mu} | \lesssim v^{\frac{1}{1+\epsilon}}\Bigl(\frac{C(\epsilon)\log(1/\delta)}{\tilde{n}}\Bigr)^{\frac{\epsilon}{1+\epsilon}},
\#
where $\hat{\mu}$ is an estimator computed by \eqref{eqn:temt} or \eqref{eqn:momms}.
On the other hand, by Theorem \ref{thm:mom}, we have that the mean of medians estimator has an error rate of $\tilde{\cO}(1/\tilde{n}^{\frac{1 - \varepsilon}{2}})$. In what follows, we compare the mean of medians estimator with previous robust estimators throughout two regimes of $\epsilon$.

\begin{itemize}
    \item When $\epsilon \le 0$ $\footnote{Here $\epsilon < 0$ means that the mean of $X_i$ doesn't exist.}$, truncated empirical mean and median of means are not valid any more. In contrast, our estimator can still tackle such heavy tailed random variables. 
    \item When $0 < \epsilon < 1$, by choosing $\varepsilon < \frac{1-\epsilon}{1+\epsilon}$ in \eqref{eq:def:MOM}, we know that mean of medians enjoys the convergence rate $\tilde{O}(1/\tilde{n}^{\frac{1 - \varepsilon}{2}})$, which is better than the rate $\tilde{\cO}(1/\tilde{n}^{\frac{\epsilon}{1+\epsilon}})$ of truncated empirical mean and median of means. 
\end{itemize}

For heavy tailed linear bandits, we mainly focus on the setting where $\epsilon < 1$ because \citet{bubeck2013bandits} proposes a nearly optimal algorithm with $\cO(\sqrt{T})$ regret when the noises have finite variance ($\epsilon \ge 1$). Therefore, we can conclude that our mean of medians estimator are preferable than previous robust estimator for heavy tailed linear bandits (especially for the super heavy-tailed linear bandits).

\section{Generic Algorithmic Framework for Super Heavy-Tailed Linear Bandits}

\subsection{A Generic Bandit Algorithm} \label{sec:algorithm}
In this section, we propose a new algorithmic framework for the super heavy-tailed linear bandits defined in Definition~\ref{def:heavy:tailed}. Specifically, by utilizing mean of medians (Algorithm \ref{alg1}) as a subroutine, we can transform any existing algorithm, e.g., \citep{dani2008stochastic,rusmevichientong2010linearly,abbasi2011improved,soare2014best,jedra2020optimal}, into an efficient algorithm for super heavy-tailed linear bandits. 
The procedure is rather basic: the outer algorithm simply collects rewards of an arm and pass them into the mean of medians estimator, and use the output value as a new reward, which will have a light tail (by Theorem~\ref{thm:mom}).
The details are given in Algorithm \ref{alg2}. 

\begin{algorithm}[H] 
	\caption{Synthetic Algorithm}
	\begin{algorithmic}[1] \label{alg2}
		\STATE {\bf {Input:}} A bandit algorithm $\cA$, $\delta >0$, $\varepsilon > 0$ and an integer $\tilde{n}$.
		\FOR{$t = 1, 2, \cdots$}
		\STATE Algorithm $\cA$ chooses the arm $x_t$.
		\STATE Receive the reward $r_t \leftarrow$ Mean of Medians$(x_t, \tilde{n},\varepsilon)$.  (Algorithm \ref{alg1})
		\ENDFOR
	\end{algorithmic}
\end{algorithm}

\begin{algorithm}[H] 
	\caption{Mean of Medians}
	\begin{algorithmic}[1] \label{alg1}
		\STATE {\bf {Input:}} An arm $y$ and an integer $\tilde n$, and a parameter $\varepsilon \in (0, 1)$.
		\STATE Set $k = \lceil{\tilde n}^{\varepsilon}\rceil$ and $k' = \lfloor {\tilde n}/k \rfloor$.
		\STATE Pull the arm $y$ for $\tilde{n}$ times and obtain the corresponding rewards $r_{1}, r_{2}, \cdots, r_{\tilde{n}}$.
	    \STATE Let $Y_{j}$ be the median of $\{r_{(j-1)k+1}, r_{(j-1)k+2}, \cdots, r_{jk}\}$ for $j \in [k']$.
		\STATE Set $r = \frac{1}{k'}\sum_{j = 1}^{k'}Y_{j}$.
		\RETURN $r$. 
	\end{algorithmic}
\end{algorithm} 

\subsection{Theoretical Guarantees}
In this subsection, we establish theoretical guarantees for our algorithmic framework (Algorithm \ref{alg2}).

\begin{theorem}[Regret Minimization] \label{thm:regret}
	Suppose a linear bandit instance, $(\cX, \theta)$,  has $\alpha$-heavy-tail noise for some $\alpha >  0$.
	Fix $\varepsilon \in (0, 1)$. Let $\tilde{n} =  \lceil \max\{C, \big(16\log(2T/\delta)\bigr)^{1/\varepsilon}, (2\cdot4^{2/\alpha}\log(4/\delta))^{\frac{1}{1-\varepsilon}}\} \rceil$, $C$ is a constant depending on $\varepsilon$ such that $2C^{1-\varepsilon}e^{-C^\varepsilon/16} \le 1$ in Algorithm \ref{alg2}.  
	Let $\cA$ be a linear bandit algorithm, which achieves regret bound $R(d, T, \delta)$ under $1$-sub-Gaussian noises with probability at least $1 - \delta$. Then, Algorithm \ref{alg2} with input $(\cA, \delta, \varepsilon, \tilde{n})$ enjoys a regret bound 
	\[\tilde{n} \cdot R(d, T/\tilde{n}, \delta)\] with probability at least $1 - 2\delta$.
	
\end{theorem}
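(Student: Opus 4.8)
The plan is to show that running algorithm $\cA$ on top of the mean of medians filter is equivalent, up to a time-rescaling, to running $\cA$ directly on a $1$-sub-Gaussian bandit instance. First I would observe that Algorithm~\ref{alg2} pulls each arm $\tilde{n}$ times per ``outer'' round. Thus if the total horizon is $T$, the number of outer rounds seen by $\cA$ is $T/\tilde{n}$; this immediately accounts for both the $R(d, T/\tilde{n}, \delta)$ factor and the leading $\tilde{n}$ multiplier, since each unit of regret incurred by $\cA$ in an outer round is paid $\tilde{n}$ times at the level of the true interaction. I would state this rescaling carefully: the regret of Algorithm~\ref{alg2} over $T$ true pulls is $\tilde{n}$ times the regret $\cA$ perceives over its $T/\tilde{n}$ rounds, because the instantaneous suboptimality $\theta^\top x^* - \theta^\top x_t$ is the same for all $\tilde{n}$ repeated pulls of $x_t$.

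Next I would verify that the reward signal $r_t$ fed to $\cA$ genuinely behaves like a $1$-sub-Gaussian reward centered at $\theta^\top x_t$. By the reward model, each of the $\tilde{n}$ raw pulls of arm $x_t$ yields $\theta^\top x_t + \eta$ with $\eta$ a symmetric $\alpha$-heavy-tail noise, so the mean of medians output is $r_t = \theta^\top x_t + \eta_{\text{mom}}$ where $\eta_{\text{mom}}$ is the mean of medians estimator applied to the centered noise. Applying Theorem~\ref{thm:mom} to the $\tilde{n}$ samples (with the chosen $\tilde{n}$ satisfying the required lower bounds), I get $\EE[\eta_{\text{mom}}] = 0$ and, with probability $1-\delta'$ for an appropriate per-round failure probability $\delta'$, the bound $|\eta_{\text{mom}}| \le \sqrt{2\cdot 4^{2/\alpha}\,\tilde{n}^{-(1-\varepsilon)}\log(4/\delta')}$. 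The key point is that the choice $\tilde{n} \ge (2\cdot 4^{2/\alpha}\log(4/\delta))^{1/(1-\varepsilon)}$ forces this magnitude bound to be at most $1$ (after fixing $\delta'$), so $\eta_{\text{mom}}$ is a mean-zero, bounded-by-$1$ random variable, hence $1$-sub-Gaussian. Therefore, on the event that all mean of medians outputs concentrate, $\cA$ is operating exactly in the regime for which its guarantee $R(d, \cdot, \delta)$ holds.

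The main obstacle is the union bound and probability bookkeeping across the outer rounds. The per-round guarantee of Theorem~\ref{thm:mom} holds with probability $1-\delta'$, but there are up to $T/\tilde{n}$ outer rounds, so I would set $\delta' = \delta/T$ (this is why the stated $\tilde{n}$ uses $\log(2T/\delta)$ in the first lower bound term, ensuring the tail-elimination condition $2k'e^{-k/8}\le \delta'/2$ holds uniformly). A union bound over all rounds then gives that \emph{every} $\eta_{\text{mom}}$ is simultaneously bounded by $1$ with probability at least $1-\delta$. Conditioned on this good event, $\cA$ sees a genuine $1$-sub-Gaussian instance, so its own regret guarantee fails with probability at most an additional $\delta$; combining the two failure events via a final union bound yields the claimed overall success probability $1-2\delta$. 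The one subtlety I would be careful about is that the ``bounded noise $\Rightarrow$ $1$-sub-Gaussian'' step must produce the constant matching $\cA$'s sub-Gaussian parameter; since a symmetric mean-zero variable bounded by $1$ in absolute value is $1$-sub-Gaussian (by Hoeffding's lemma the sub-Gaussian constant of a variable supported on $[-1,1]$ is at most $1$), this lines up with the hypothesis that $\cA$ handles $1$-sub-Gaussian noise, and I would invoke exactly that to close the argument.
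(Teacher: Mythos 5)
Your proposal is correct and follows essentially the same route as the paper's own proof: partition the $T$ pulls into $\lfloor T/\tilde{n}\rfloor$ blocks, invoke Theorem~\ref{thm:mom} with per-block failure probability $\delta/T$ to reduce the filtered rewards to mean-zero noise bounded by $1$ (hence $1$-sub-Gaussian), and combine the union bound over blocks with $\cA$'s guarantee to get $\tilde{n}\cdot R(d,T/\tilde{n},\delta)$ with probability $1-2\delta$. Your write-up is in fact somewhat more careful than the paper's (making explicit the time-rescaling of the regret, the choice $\delta'=\delta/T$, and the Hoeffding's-lemma step from bounded to $1$-sub-Gaussian), but the underlying argument is identical.
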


\begin{proof}
	Fix $T > 0$. We divide total $T$ steps into $\tilde{T} = \lfloor T/\tilde{n} \rfloor$ blocks, each of which consists of $\tilde{n}$ steps. With each block, we pull an arm for $\tilde{n}$ times. For $t$-th block, we assume that $r_t = \theta^\top x_t + \eta_t$. By Theorem \ref{thm:mom}, for any $t \in [T]$, we have 
	\$
	\EE[\eta_t] = 0, \quad\text{and}\quad |\eta_t| \le 1
	\$
	with probability at least $1 - \delta/T$. Thus, we reduce the super heavy tailed linear bandits into linear bandits with $1$-sub-Gaussian noises. Together with our assumption that $\cA$ achieves regret $R(d, T, \delta)$ for the $1$-sub-Gaussian linear bandits with probability at least $1 - \delta$, we have 
	\$
	\text{Regret}(T) \le \tilde{n} \cdot R(d, T/\tilde{n}, \delta),
	\$
	with probability at least $1 - 2\delta$, which concludes the proof of Theorem \ref{thm:regret}.
\end{proof}

\begin{remark}
The parameter $\alpha$ does not need to be known exactly. Any lower bound of the true $\alpha$ suffices to ensure the same guarantee. It can also be treated as a hyper parameter in the algorithm.
\end{remark}
\begin{remark}
\citet{shao2018almost,xue2020nearly} establish an expected regret lower bound $\Omega(T^{\frac{1}{1+\epsilon}})$ for the linear bandits with heavy-tailed payoffs, where the payoffs admit finite $1+\epsilon$ moments for some $\epsilon \in (0, 1]$. This is not inconsistent with our conclusion since we consider the pseudo regret instead of the expected regret. Moreover, it is reasonable to consider the pseudo regret since we cannot define the expected regret when the mean of the noise does not exist.
\end{remark}
\begin{remark}
Note that for any fixed $\alpha>0$, $\tilde{n}$ is only logarithmically depending on $T$. Hence the overall regret bound is only a factor of $\mathrm{poly}\log(T)$ worse compared to the light tail counterpart. If algorithm $\cA$ obtains a near-optimal regret for sub-Gaussian noise (e.g., algorithms in \cite{abbasi2011improved}), then our regret bound is near optimal for $\alpha$-heavy-tail noise as well.
To instantiate  Algorithm \ref{alg2}, we apply the near-optimal algorithm in \citet{abbasi2011improved}, and immediately obtain the following near-optimal regret bound.
\end{remark}

\begin{corollary} \label{cor:oful}
    Suppose a linear bandit instance, $(\cX, \theta)$,  has $\alpha$-heavy-tail noise for some $\alpha >  0$.
	We use the OFUL algorithm in \citet{abbasi2011improved} as the input in Algorithm \ref{alg2} and set $\varepsilon = 1/2$. Let $\tilde{n} =  \lceil \max\{C, \big(16\log(2T/\delta)\bigr)^{2}, (2\cdot4^{2/\alpha}\log(4/\delta))^{2}\} \rceil$, where $C$ is a constant such that $2\sqrt{C}e^{-\sqrt{C}/16} \le 1$ in Algorithm \ref{alg2}.  Then, by using Algorithm \ref{alg2} achieves a regret bound $\tilde{\cO}(d \sqrt{\tilde{n} T} \log(T/\delta))$ with probability at least $1 - \delta$.
\end{corollary}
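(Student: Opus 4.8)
The plan is to obtain Corollary~\ref{cor:oful} as an immediate specialization of Theorem~\ref{thm:regret} with $\varepsilon = 1/2$ and $\cA$ taken to be OFUL. First I would recall the regret guarantee of the OFUL algorithm of \citet{abbasi2011improved}: under $1$-sub-Gaussian noise, OFUL attains, with probability at least $1-\delta$, a bound of the form $R(d,T,\delta) = \tilde{\cO}\bigl(d\sqrt{T}\log(T/\delta)\bigr)$. This is precisely the hypothesis that Theorem~\ref{thm:regret} requires of the base algorithm $\cA$, so OFUL is an admissible choice and the reduction applies verbatim.

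Next I would instantiate the generic block length of Theorem~\ref{thm:regret} at $\varepsilon = 1/2$. Substituting $1/\varepsilon = 2$ and $1/(1-\varepsilon) = 2$ into $\tilde{n} = \lceil \max\{C, (16\log(2T/\delta))^{1/\varepsilon}, (2\cdot 4^{2/\alpha}\log(4/\delta))^{1/(1-\varepsilon)}\}\rceil$ yields exactly $\tilde{n} = \lceil \max\{C, (16\log(2T/\delta))^{2}, (2\cdot 4^{2/\alpha}\log(4/\delta))^{2}\}\rceil$, with the defining condition on $C$ specializing to $2\sqrt{C}e^{-\sqrt{C}/16}\le 1$, matching the corollary. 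This choice is what guarantees (via Theorem~\ref{thm:mom} applied with failure probability $\delta/T$ in each of the $\tilde{T}=\lfloor T/\tilde{n}\rfloor$ blocks, then union-bounded) that every filtered reward $r_t$ returned by Algorithm~\ref{alg1} has the form $\theta^\top x_t + \eta_t$ with $\EE[\eta_t]=0$ and $|\eta_t|\le 1$ with high probability, so the filtered noise is bounded in $[-1,1]$ and hence $1$-sub-Gaussian, exactly the regime in which OFUL's bound holds.

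Then I would invoke the conclusion of Theorem~\ref{thm:regret}, which bounds the regret of Algorithm~\ref{alg2} by $\tilde{n}\cdot R(d, T/\tilde{n}, \delta)$, and plug in OFUL's bound:
\[
\tilde{n}\cdot R(d, T/\tilde{n}, \delta) = \tilde{n}\cdot \tilde{\cO}\!\Bigl(d\sqrt{T/\tilde{n}}\,\log(T/\delta)\Bigr) = \tilde{\cO}\!\bigl(d\sqrt{\tilde{n} T}\,\log(T/\delta)\bigr),
\]
where the only arithmetic is $\tilde{n}\cdot\sqrt{T/\tilde{n}} = \sqrt{\tilde{n} T}$ and $\log((T/\tilde{n})/\delta)\le\log(T/\delta)$. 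The remaining bookkeeping is the confidence budget: Theorem~\ref{thm:regret} delivers the bound with probability $1-2\delta$, so to match the $1-\delta$ phrasing of the corollary I would rescale $\delta\mapsto\delta/2$, which only perturbs constants and logarithmic factors absorbed into $\tilde{\cO}$.

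I do not anticipate any genuine obstacle, since the corollary is essentially a plug-in of a known sub-Gaussian regret bound into the black-box reduction already established. The only points demanding care are clerical: correctly matching the exponents generated by $\varepsilon=1/2$ against the $\tilde{n}$ formula, tracking the $\log(2T/\delta)$ term arising from the per-block union bound over the $\tilde{T}$ blocks, and the constant-factor adjustment between the $1-2\delta$ and $1-\delta$ confidence levels.
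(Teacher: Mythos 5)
Your proposal is correct and matches the paper's intent exactly: the paper states Corollary~\ref{cor:oful} as an immediate instantiation of Theorem~\ref{thm:regret} with $\cA = $ OFUL and $\varepsilon = 1/2$, which is precisely the plug-in computation $\tilde{n}\cdot\tilde{\cO}(d\sqrt{T/\tilde{n}}\log(T/\delta)) = \tilde{\cO}(d\sqrt{\tilde{n}T}\log(T/\delta))$ you carry out. Your additional bookkeeping on the $1-2\delta$ versus $1-\delta$ confidence level is a detail the paper glosses over, and you handle it correctly.
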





\section{Experiment} \label{sec:experiment}

In this section, we conduct numerical experiments to demonstrate the effectiveness of our algorithmic framework for super heavy-tailed linear bandit problems. 
Experiments are run in a Windows 10 laptop with Intel(R) Core(TM) i7-8750H CPU and 16GB memory.

We adopt state-of-the-art algorithms, i.e., SupBMM and SupBTC, proposed by \citet{xue2020nearly} as the input algorithm $\cA$ in Algorithm~\ref{alg2}, yielding synthetic algorithms SupBMM\_mom and SupBTC\_mom respectively. Here ``mom'' means using our mean of medians estimator to process noise as in Algorithm~\ref{alg2}. Apart from SupBMM and SupBTC of \citet{xue2020nearly}, we also make comparisons with MoM and CRT of \citet{medina2016no}, MENU and TOFU of \citet{shao2018almost}. All the traditional algorithms require moments condition, i.e., for some $\epsilon\in(0,1]$, $(1+\epsilon)$-th central moment of the heavy-tailed noise is bounded under some $v > 0$. 

For comparison, we show cumulative regret with respect to number of rounds of bandits played over a fixed finite-arm decision set $\cX$. We generate 10 independent paths for each algorithms and show the average cumulative regret. We use the following experimental setup corresponding to that in \citet{medina2016no,xue2020nearly}. Let the feature dimension $d = 10$, the number of arms $|\cX|=K = 20$. For the chosen arm $x_t \in \cX$, reward is $\theta^{*\top} x_t + \eta_t$, where $\theta^* = \mathbf{1}_d / \sqrt{d} \in \RR^d$ so that $\|\theta^*\|_2 = 1$ and $\eta_t$ is sampled from a Student's $t$-distribution with degree of freedom \text{df} as a parameter to be specified. Every contextual information of time $t$, i.e., $x_{t,a},a\in[K]$ , is sampled from uniform distribution of $[0,1]$ respectively for each dimension, with normalization made to ensure $\|x_{t,a}\|_2 = 1$.

This section is divided into 2 parts: First, we choose an environment with heavy-tailed noise, whose $(1+\epsilon)$-th central moment is finite for some $\epsilon\in(0,1]$. Student's $t$-noises with $\text{df}\in\{3, 1.02\}$ are chosen, whose moment parameter $\epsilon\in\{1, 0.01\}$ and bound parameter $v\in\{3,65.19\}$ respectively. 
Then we consider linear bandits with super heavy-tailed noise.
Student's $t$-noises with $\text{df}\in\{1, 0.5\}$ are chosen.
In this setting, $(1+\epsilon)$-th central moment no longer exists for any $\epsilon\ge0$.
In theory, none of the algorithms mentioned above could work properly. 
In order to make other algorithms work, we input $\epsilon=0.01$ and treat $v$ as a hyper parameter that needs to be tuned for relatively good performance. We remark that Algorithm~\ref{alg2} is relatively not sensitive to the choice of $v$ (See appendix).

Our algorithm's parameter $\varepsilon$ is set to $0.5$ since results vary little with $\varepsilon$ empirically (See appendix for more information). And $\tilde{n},k,k'$ is set according to Theorem~\ref{thm:regret} and Algorithm~\ref{alg1}. For the noise processed by our mean of medians estimator (Algorithm~\ref{alg1}), we input $\epsilon=1$ and tune $v$ to ensure the performance of Algorithm~\ref{alg2}. Specifically, no matter how heavy-tailed original noise is, our mean of medians estimator can reduce super heavy-tailed noise to bounded noise with high probability, as is shown in Theorem~\ref{thm:mom}. So it is reasonable to assume the processed noise has a finite second moment, which is bounded by an unknown $v$ to be tuned, i.e. $\EE[|\eta_\text{mom}|^2]\le v$.

\begin{figure}[tb]
    \caption{Comparison of our algorithms versus MoM, CRT, MENU, TOFU, SupBMM and SupBTC in heavy-tailed linear bandit problems for $1\times10^4$ rounds.}
    \label{fig:heavy-tailed-comparison}
    \centering 
    \subfigure[Student's $t$-Noise with $\text{df} = 3$]{\includegraphics[width=0.49\textwidth]{./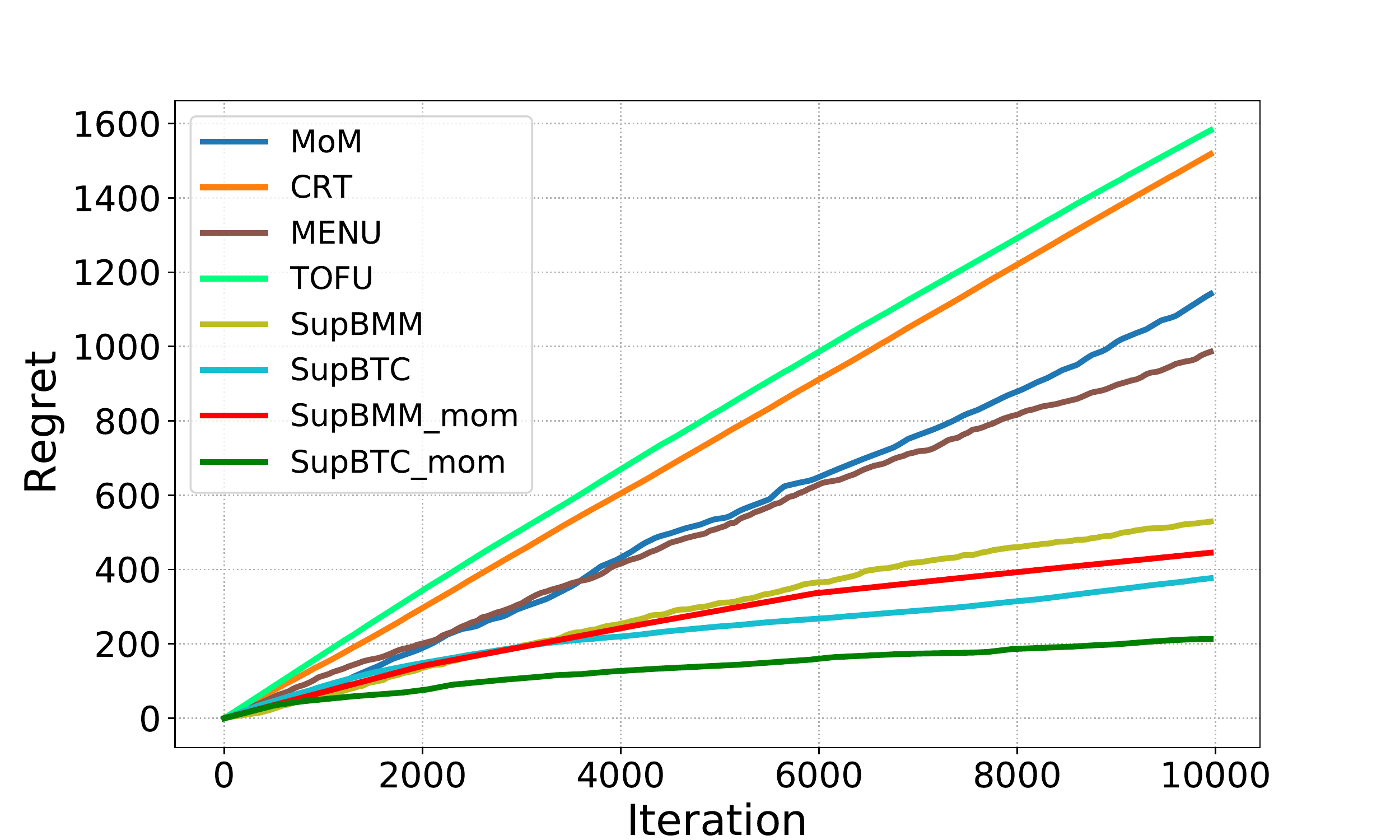}}
    \subfigure[Student's $t$-Noise with $\text{df} = 1.02$]{\includegraphics[width=0.49\textwidth]{./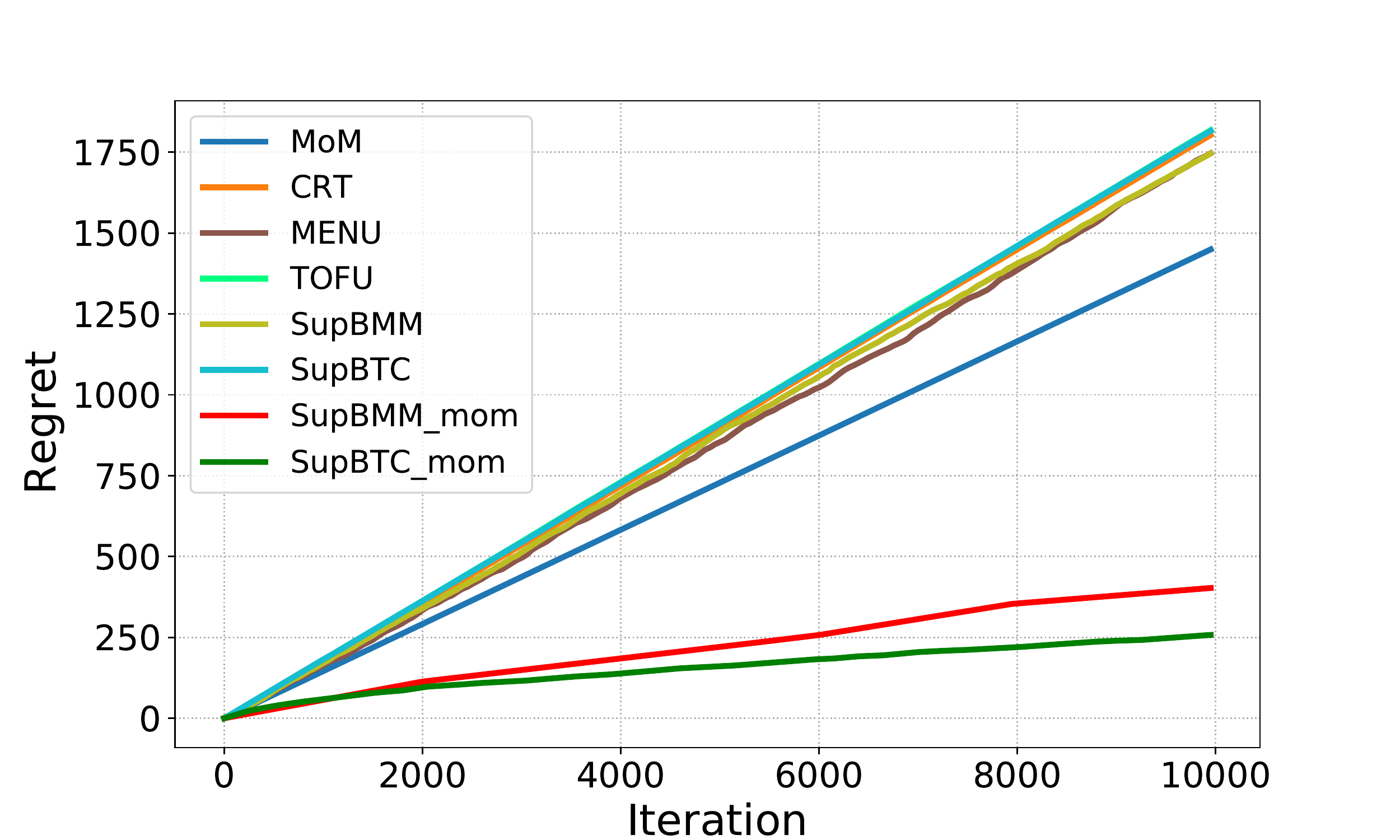}}
\end{figure}

We show experimental results of heavy-tailed linear bandit problems in Figure~\ref{fig:heavy-tailed-comparison} for $1\times10^4$ rounds. Figure~\ref{fig:heavy-tailed-comparison}(a) compares our algorithms with the aforementioned six algorithms under Student’s $t$-noise with $\text{df} = 3$, which corresponds to the results in \citet{xue2020nearly}. Figure~\ref{fig:heavy-tailed-comparison}(b) presents regret versus iteration with $\text{df} = 1.02$. In Figure~\ref{fig:heavy-tailed-comparison}(b), our algorithms outperform MoM, CRT, MENU, TOFU, SupBMM and SupBTC with heavy-tailed noise as expected. Specially, in Figure~\ref{fig:heavy-tailed-comparison}(b), $\text{df}=1.02$, so $1.01$-th central moment exists, which is bounded by a rather big number $v=65.19$. In this setting, all of other algorithms perform poorly, whereas our algorithms work perfectly well, which verifies effectiveness of our algorithms as in Section~\ref{sec:comparison} and Theorem~\ref{thm:regret}.
  


\begin{figure}[tb]
    \caption{Comparison of our algorithms versus MoM, CRT, MENU, TOFU, SupBMM and SupBTC in super heavy-tailed linear bandit problems for $1\times10^4$ rounds.}
    \label{fig:super-heavy-tailed-comparison}
    \centering 
    \subfigure[Student's $t$-Noise with $\text{df} = 1$]{\includegraphics[width=0.49\textwidth]{./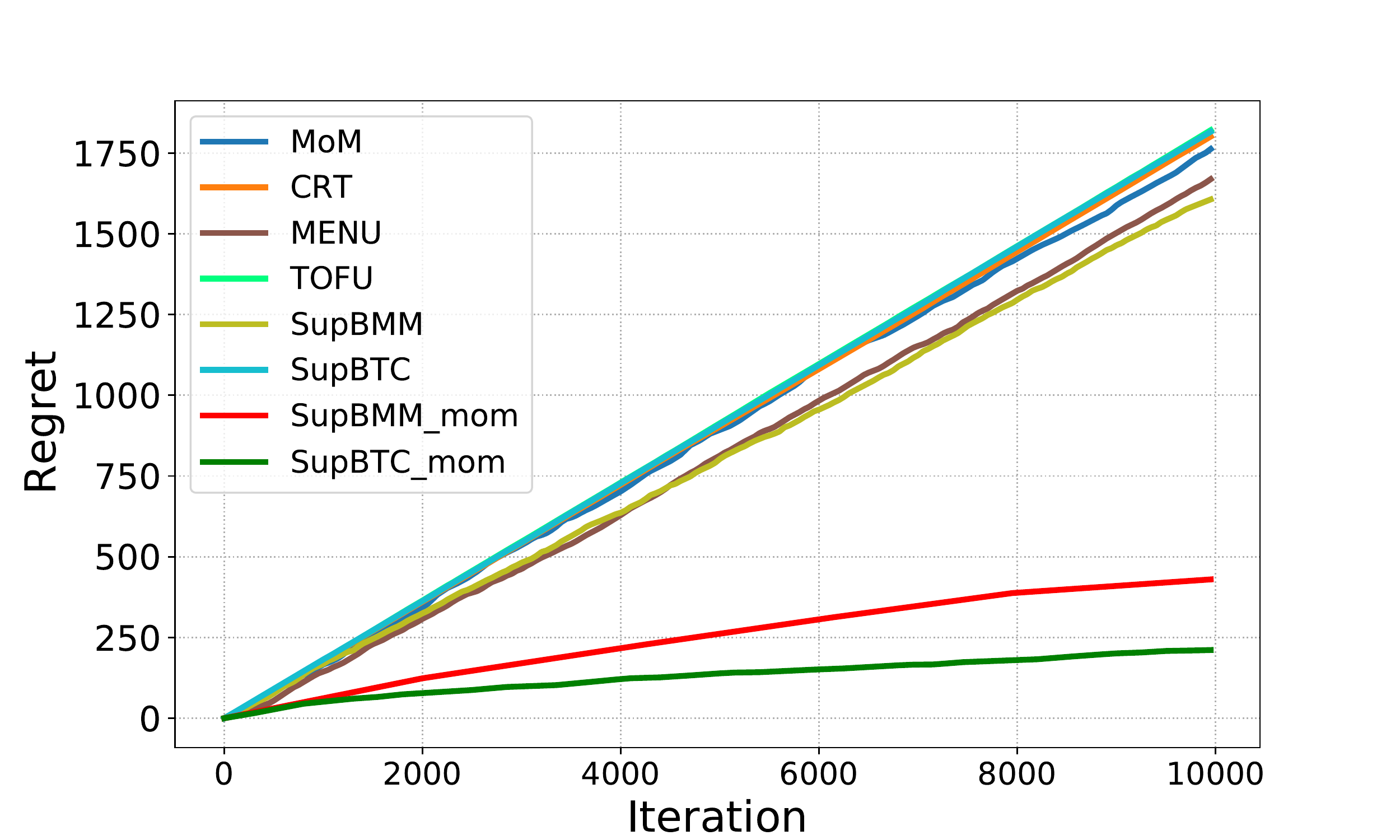}}
    \subfigure[Student's $t$-Noise with $\text{df} = 0.5$]{\includegraphics[width=0.49\textwidth]{./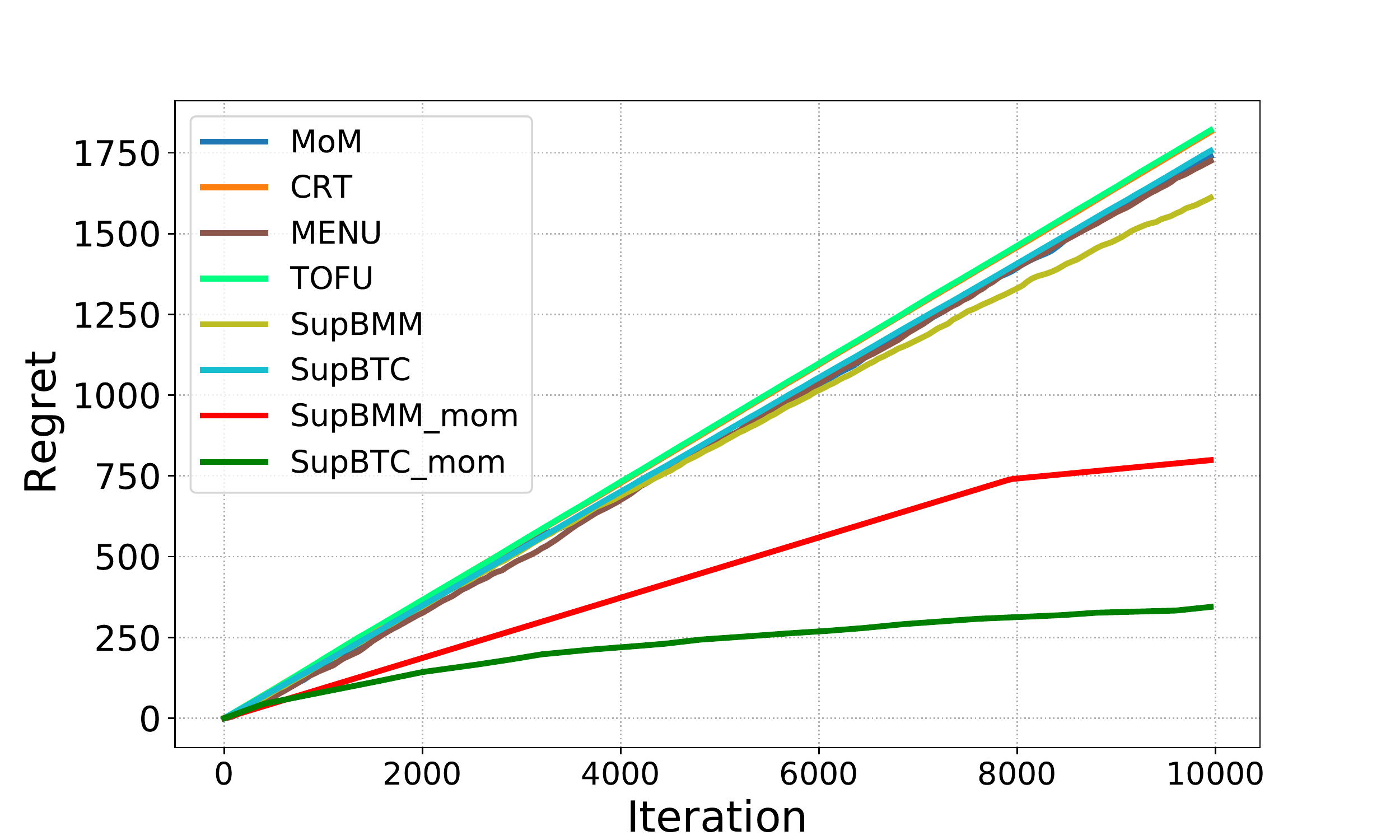}}
\end{figure}

Experimental results of super heavy-tailed linear bandit problems are demonstrated in Figure~\ref{fig:super-heavy-tailed-comparison} for $1\times10^4$ rounds. Figure~\ref{fig:super-heavy-tailed-comparison}(a) and (b) consider $\text{df}=1,0.5$ respectively. As is shown in Figure~\ref{fig:super-heavy-tailed-comparison}, our algorithms perform significantly better than other algorithms. And SupBTC\_mom performs better than SupBMM\_mom.

In a word, our algorithms outperform the state-of-the-art algorithms even when $\epsilon>0$, and have comparably good performance when $\epsilon\le0$, which is consistent with the theoretical results in Theorem~\ref{thm:regret}.

\section{Conclusion}
In this work, we have proposed a generic algorithmic framework for super heavy-tailed linear bandits. Such an algorithmic framework incorporates a classical linear bandit algorithm to tackle existing challenges such as the trade-off between exploration and exploitation, and more importantly, adopts the mean of medians estimator to handle the challenge of super heavy-tailed noises. We show that our algorithmic framework is provably efficient for  regret minimization. Meanwhile, we conducted numerical experiments to validate the effectiveness of our framework in practice. To the best of our knowledge, we make the first attempt to study the super heavy-tailed linear bandits and propose the first provably efficient method that successfully handles super heavy-tailed noises. 

\section{Acknowledgments}
The authors would like to thank anonymous reviewers for their valuable advice. Part of the work was done while Han Zhong and Jiayi Huang were students in University of Science and Technology of China. This work was supported by National Key R$\&$D Program of China (2018YFB1402600), Key-Area Research and Development Program of Guangdong Province (No. 2019B121204008), BJNSF (L172037) and Beijing Academy of Artificial Intelligence. Project 2020BD006 supported by PKU-Baidu Fund.

\bibliographystyle{plainnat}
\bibliography{ref}

\newpage
\appendix
\section{Additional Experimental Results}

This section provides additional experimental results of Section~\ref{sec:experiment}.



\begin{figure}[tb]
    \caption{Comparison of our algorithms versus MoM, CRT, MENU, TOFU, SupBMM and SupBTC under Student's $t$-Noise with $\text{df} = 3$. The figures at the bottom of each subfigure represent estimation error $\|\hat{\theta}_t-\theta^*\|_2/\|\theta^*\|_2$, except for SupBMM and SupBTC since $\hat{\theta}_t$ is not available.}
    \label{fig:heavy-tailed-t3}
    \centering 
    \subfigure[MoM]{\includegraphics[width=0.49\textwidth]{./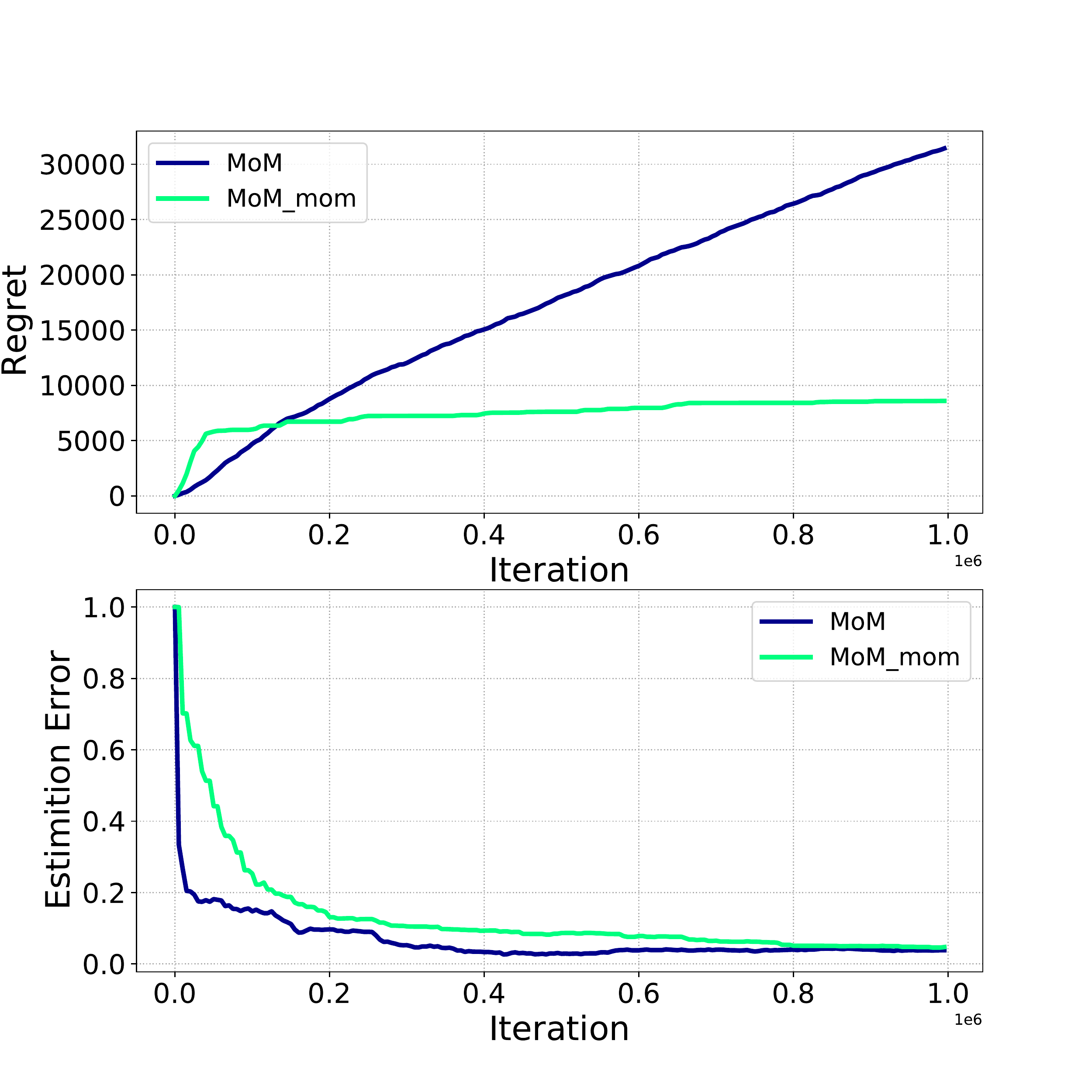}}
    \subfigure[CRT]{\includegraphics[width=0.49\textwidth]{./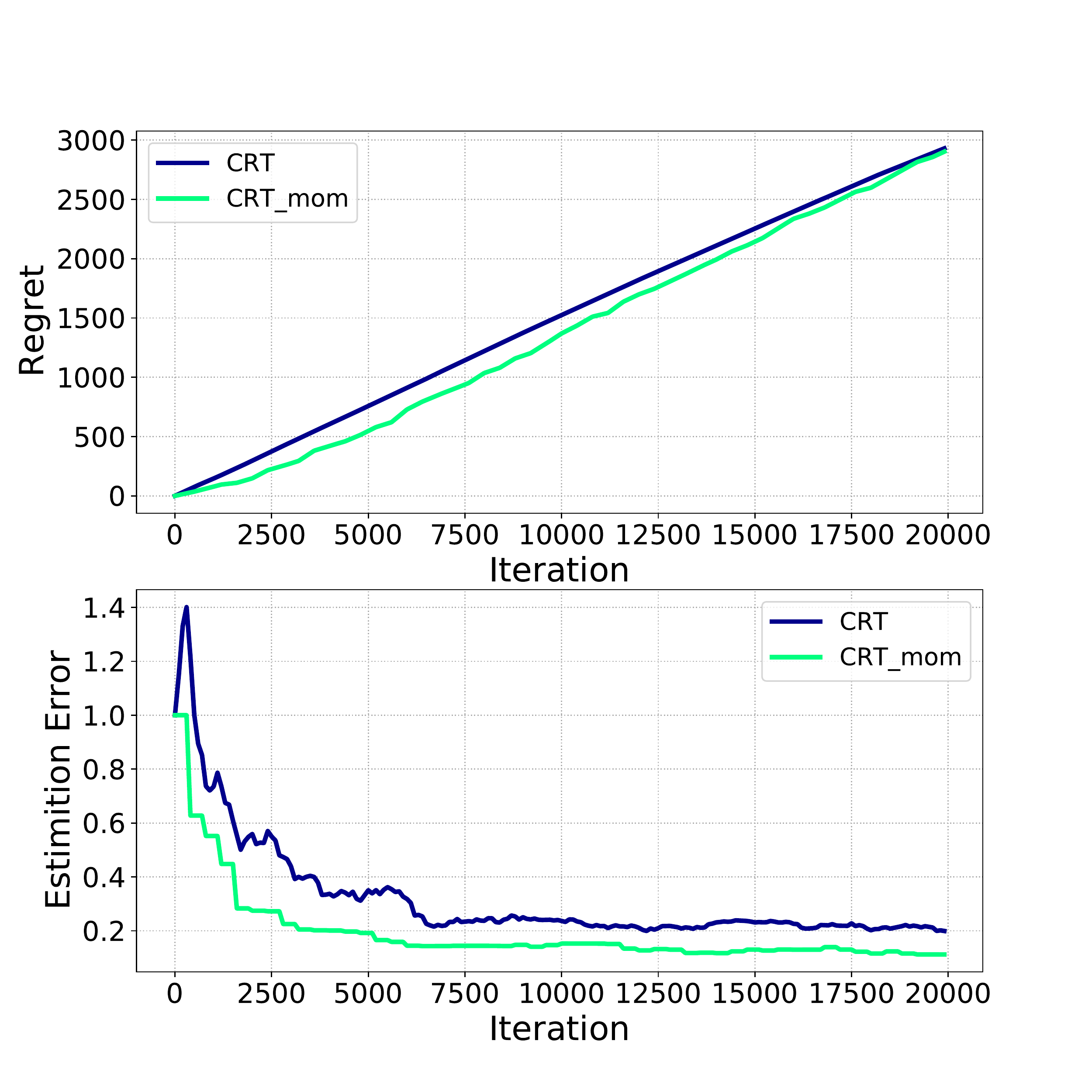}}
    \subfigure[MENU]{\includegraphics[width=0.49\textwidth]{./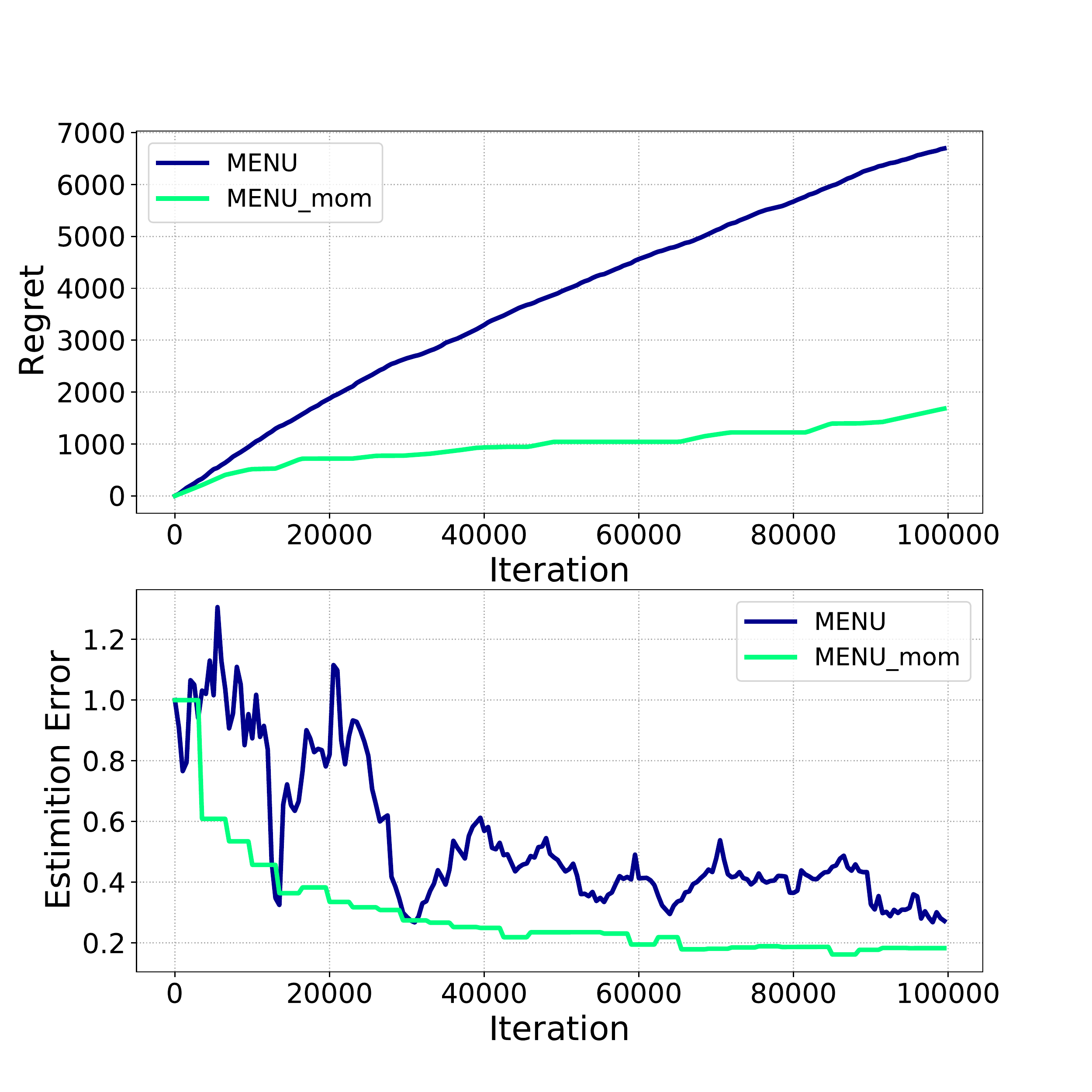}}
    \subfigure[TOFU]{\includegraphics[width=0.49\textwidth]{./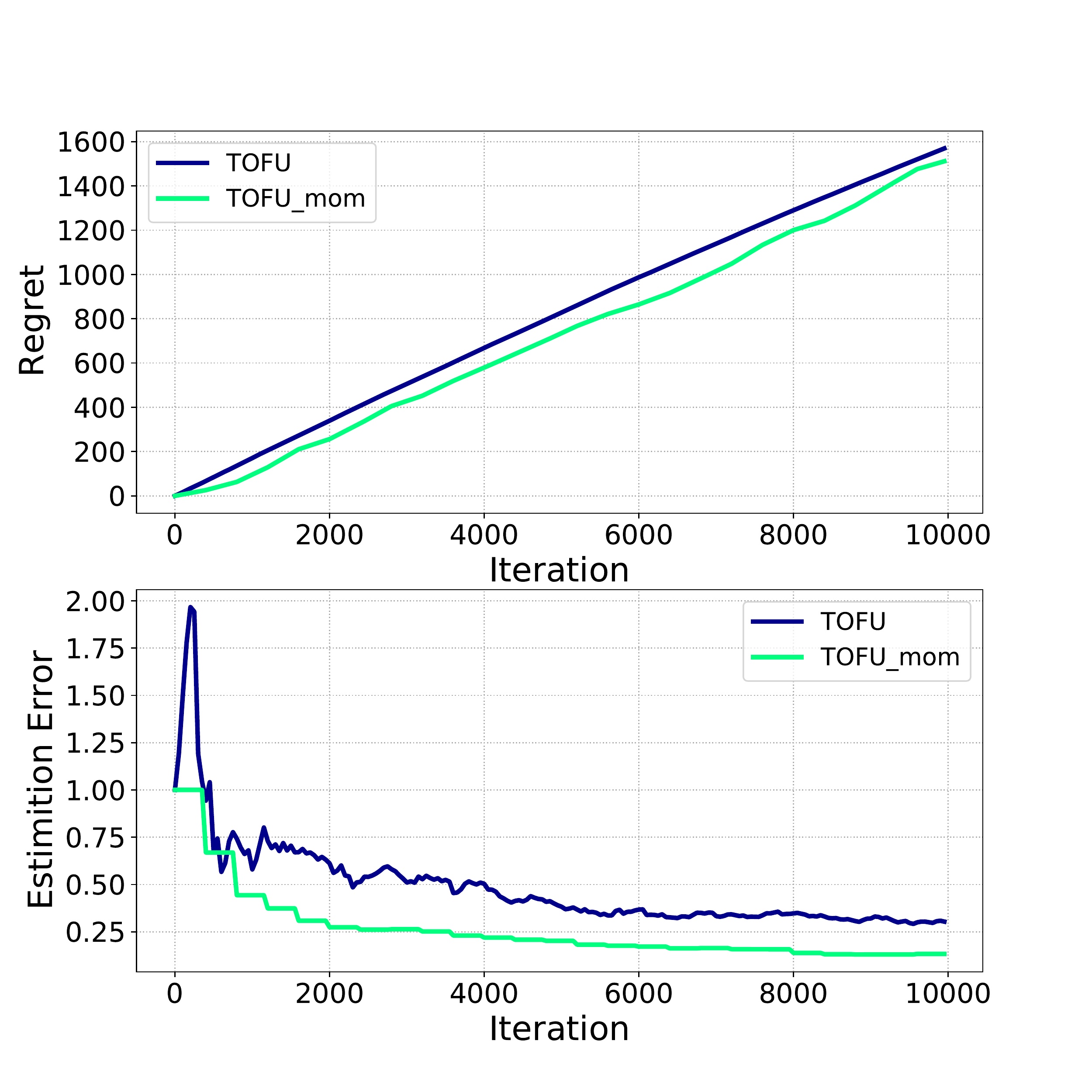}}
    \subfigure[SupBMM]{\includegraphics[width=0.49\textwidth]{./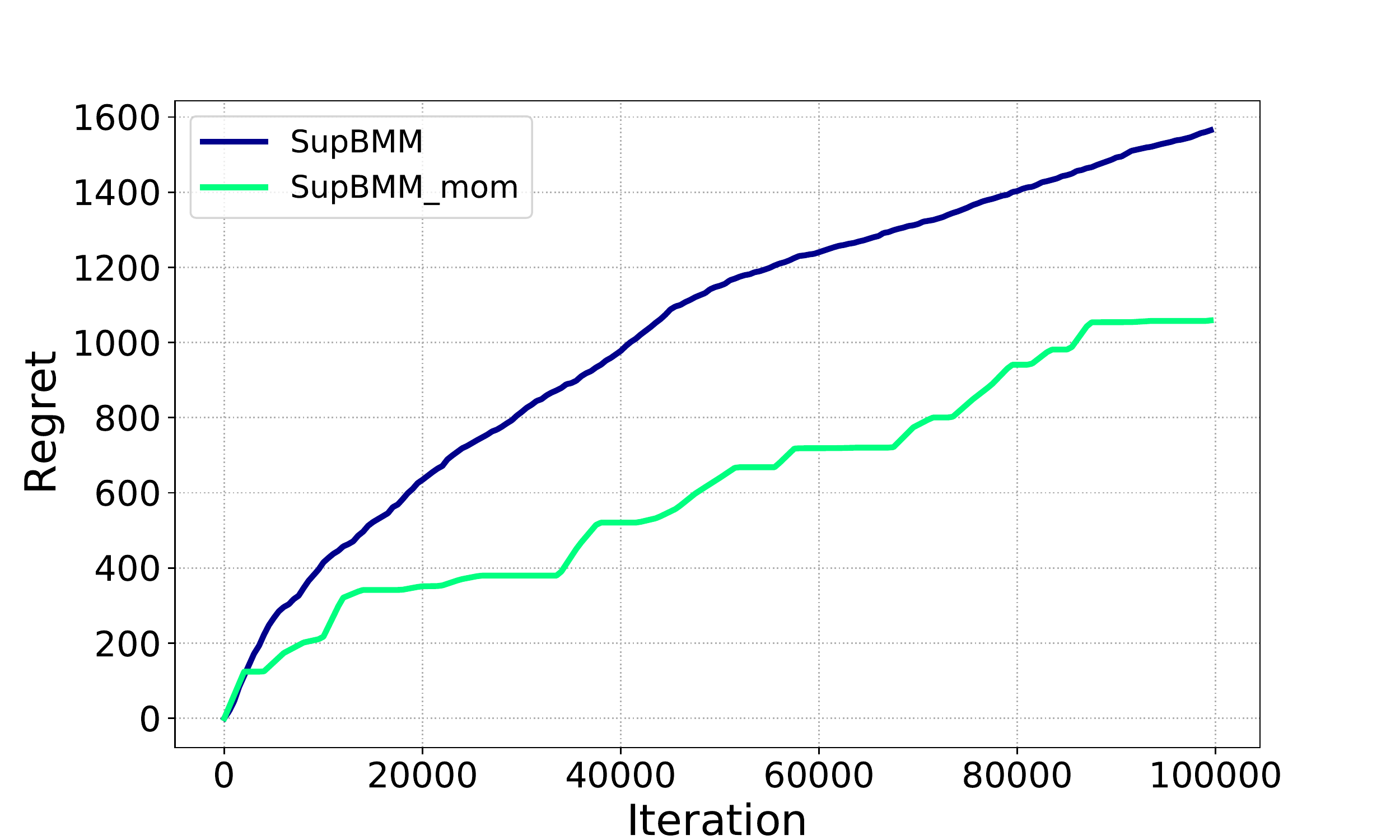}}
    \subfigure[SupBTC]{\includegraphics[width=0.49\textwidth]{./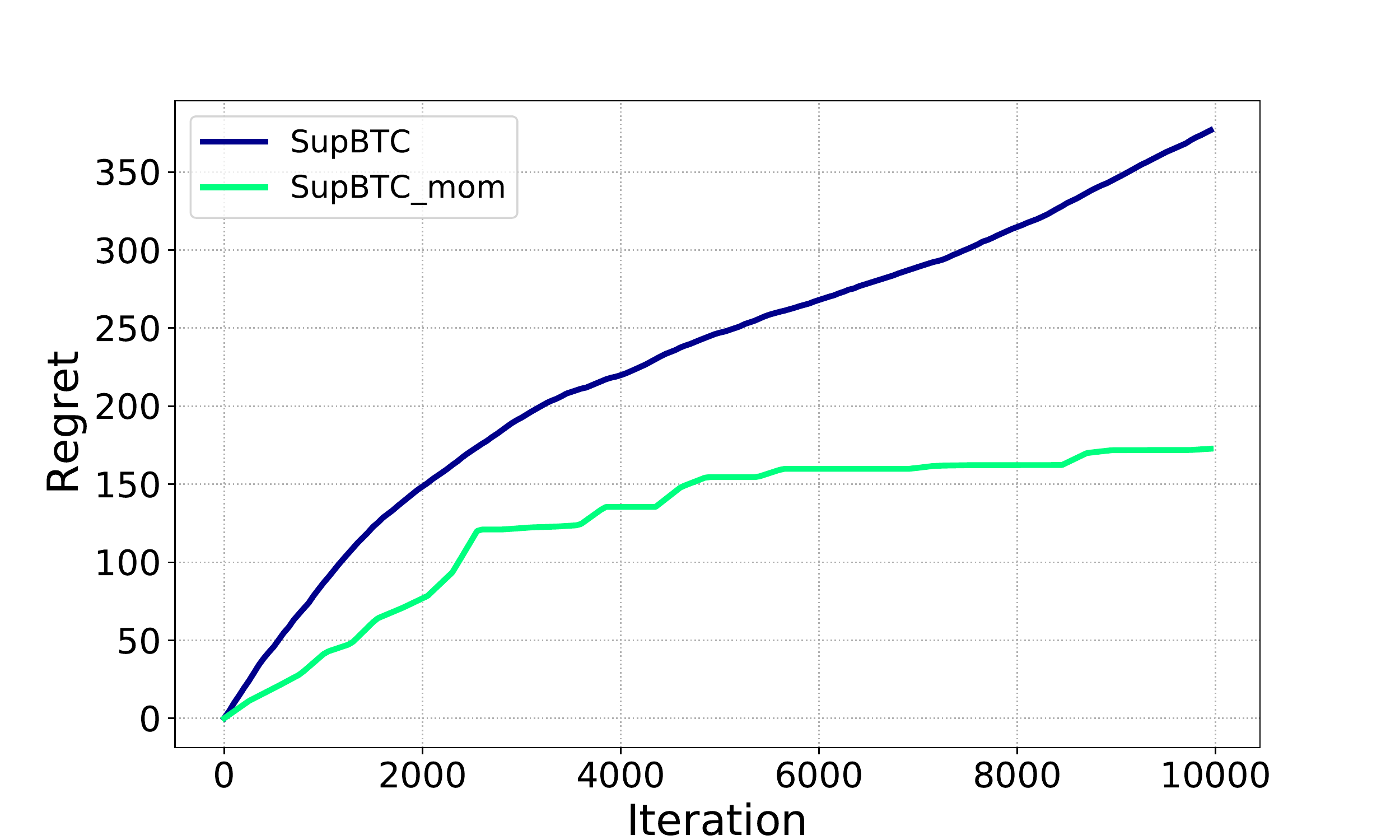}}
\end{figure}

\begin{figure}[tb]
    \caption{Comparison of our algorithms versus MoM, CRT, MENU, TOFU, SupBMM and SupBTC under Student's $t$-Noise with $\text{df} = 1.02$. The figures at the bottom of each subfigure represent estimation error $\|\hat{\theta}_t-\theta^*\|_2/\|\theta^*\|_2$, except for SupBMM and SupBTC since $\hat{\theta}_t$ is not available.}
    \label{fig:heavy-tailed-t1.02}
    \centering 
    \subfigure[MoM]{\includegraphics[width=0.49\textwidth]{./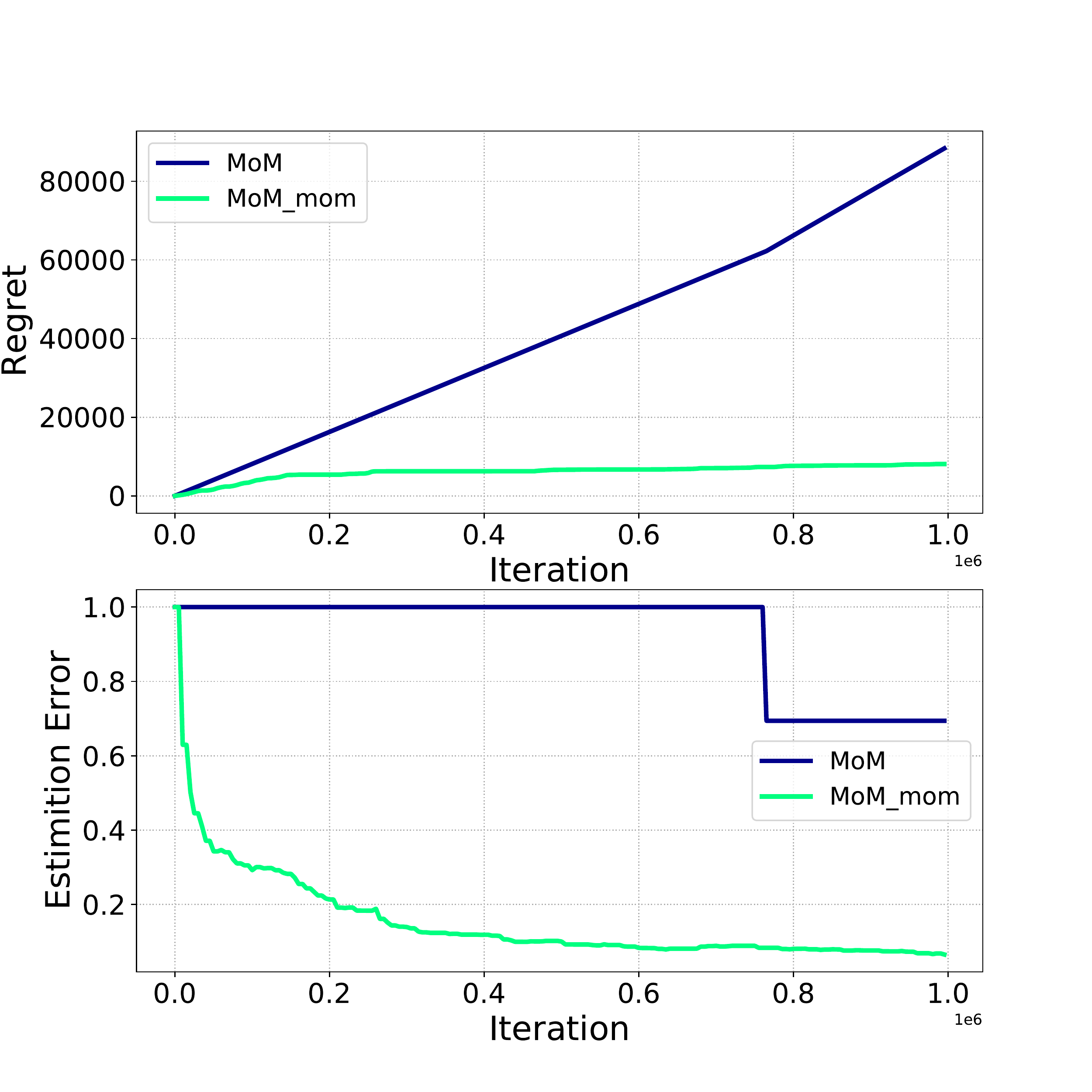}}
    \subfigure[CRT]{\includegraphics[width=0.49\textwidth]{./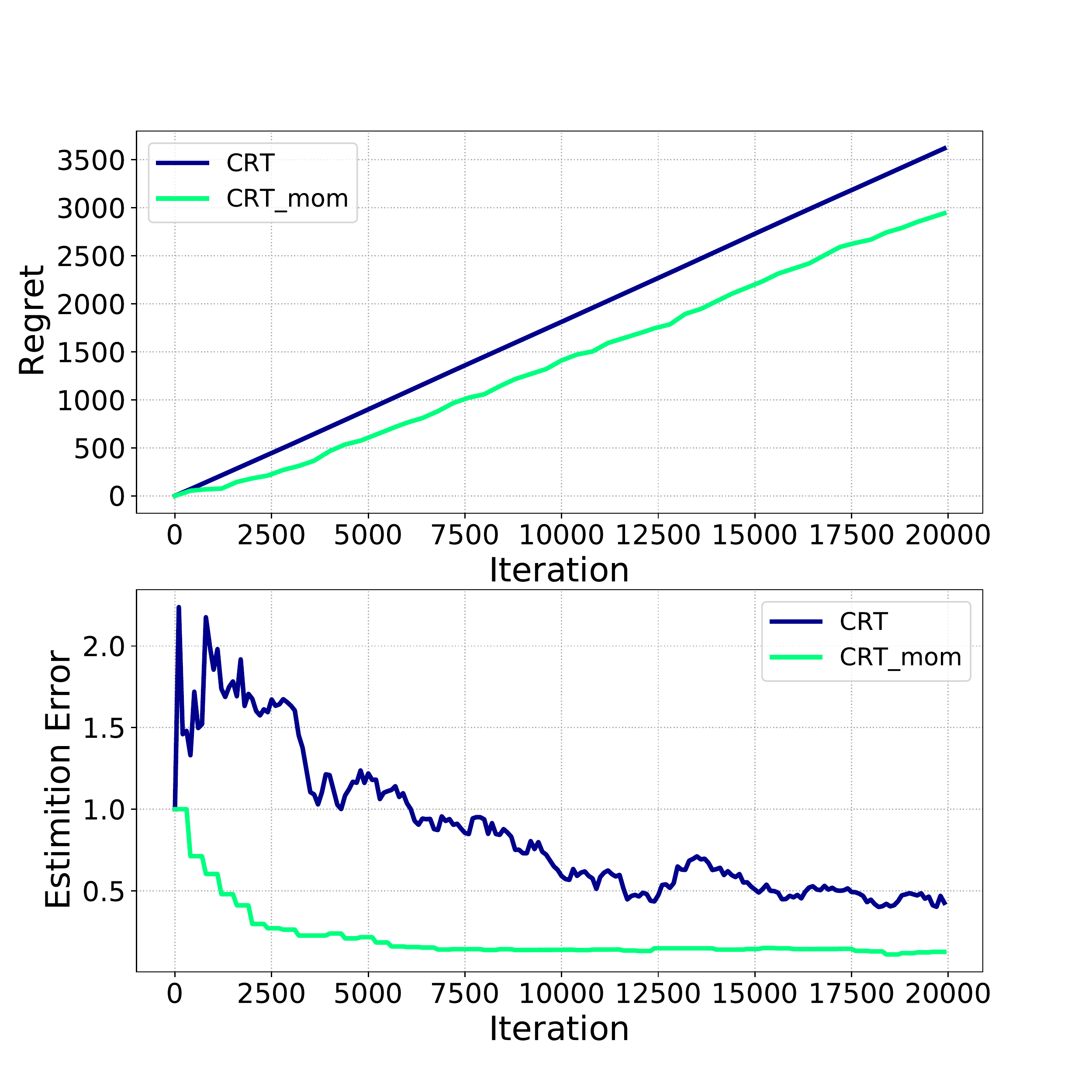}}
    \subfigure[MENU]{\includegraphics[width=0.49\textwidth]{./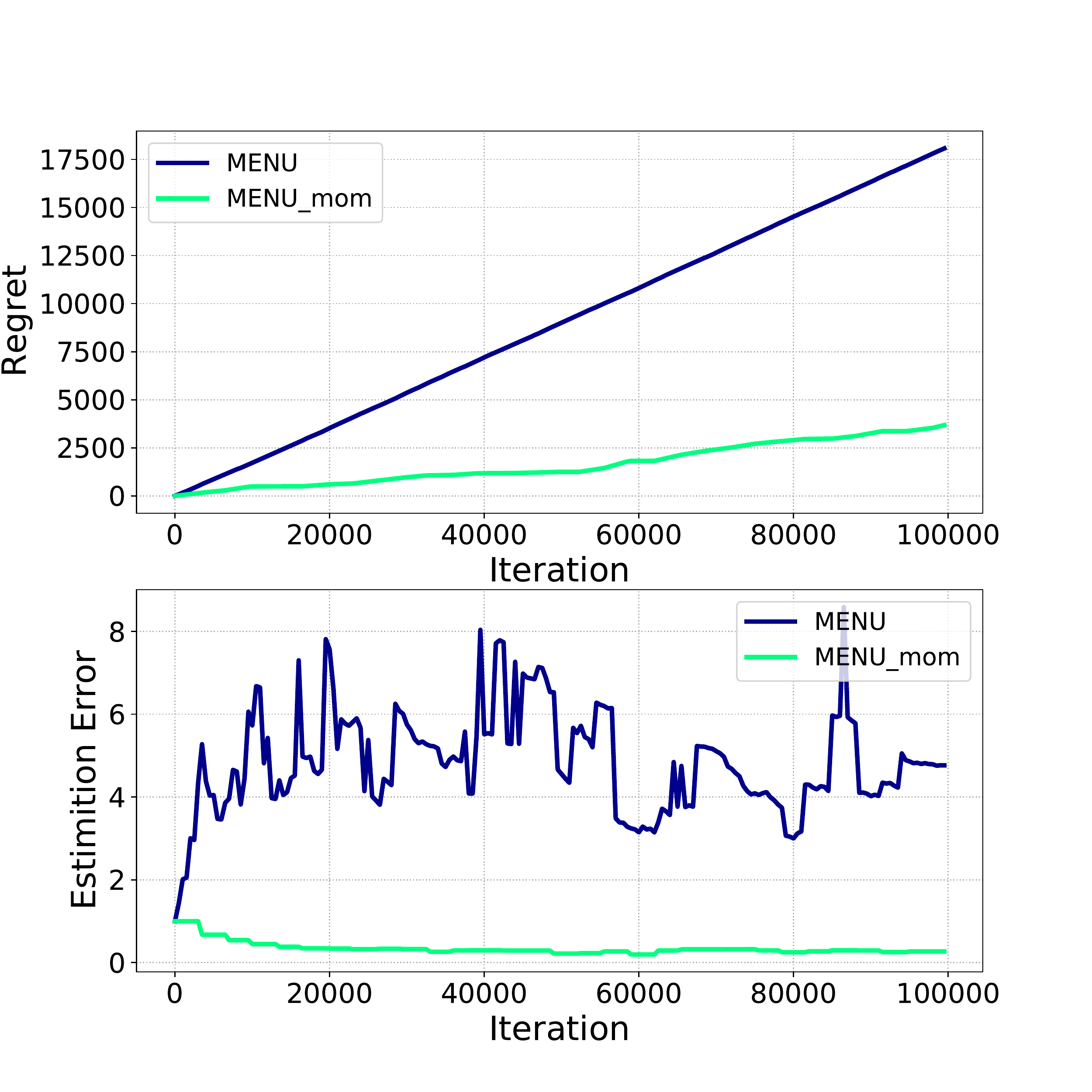}}
    \subfigure[TOFU]{\includegraphics[width=0.49\textwidth]{./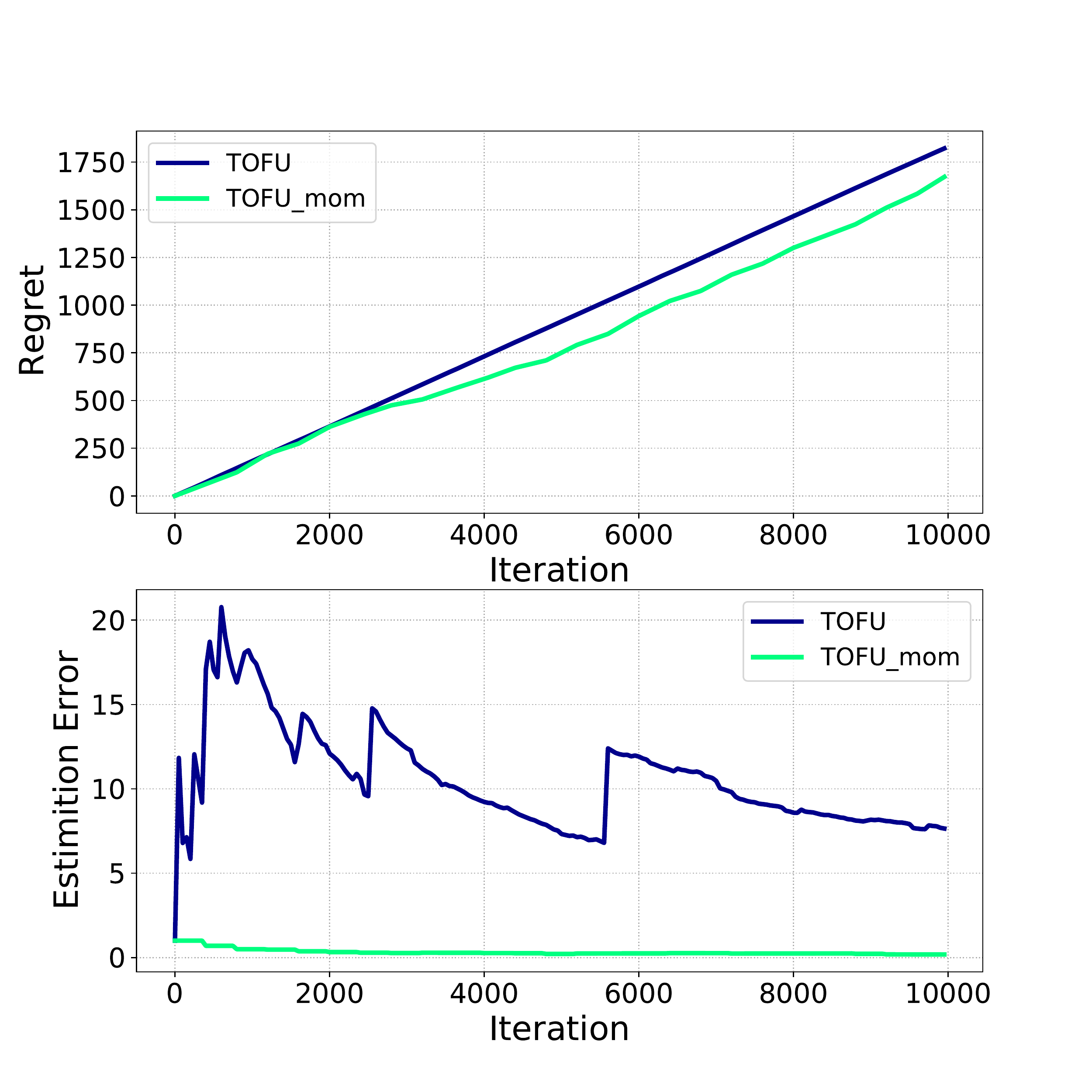}}
    \subfigure[SupBMM]{\includegraphics[width=0.49\textwidth]{./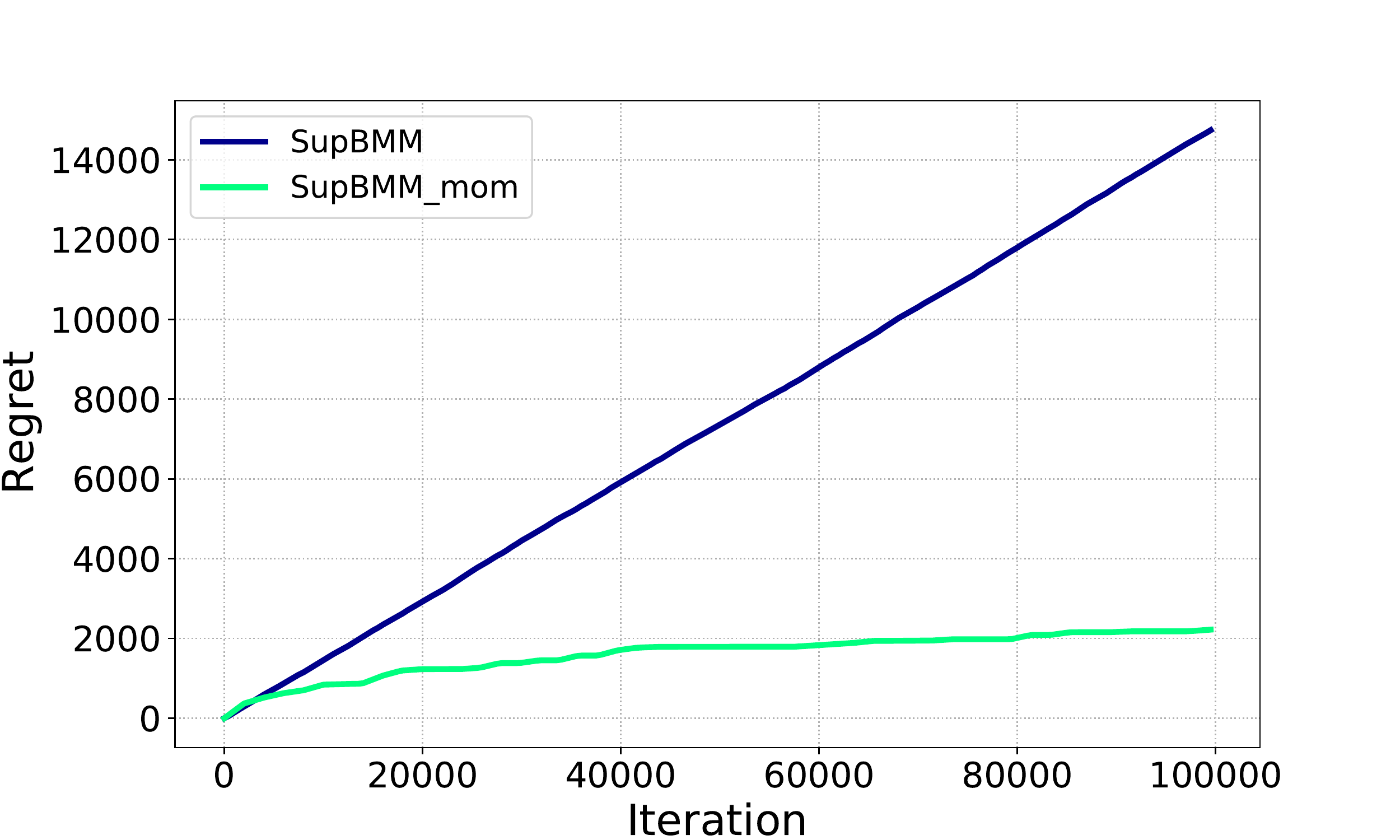}}
    \subfigure[SupBTC]{\includegraphics[width=0.49\textwidth]{./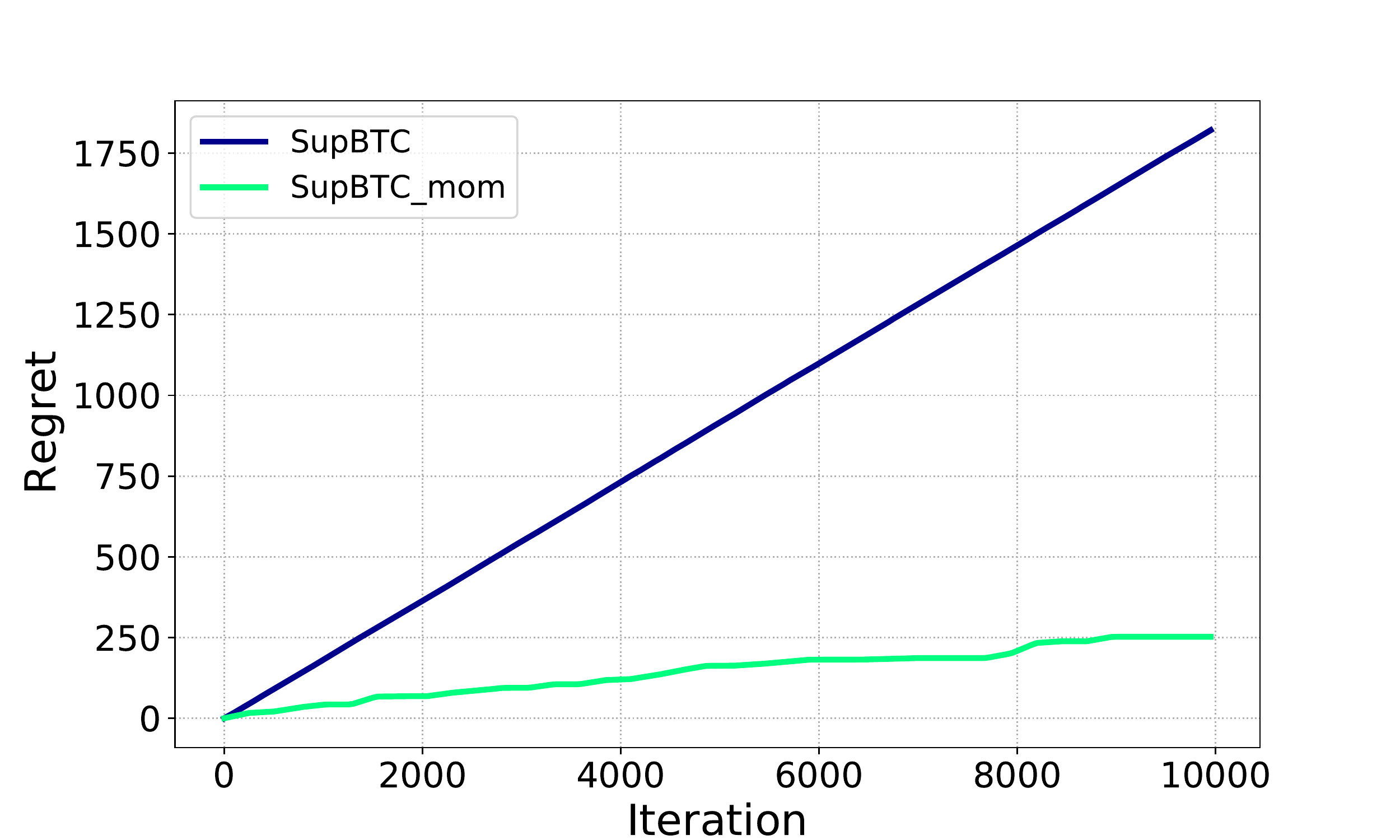}}
\end{figure}

Figure~\ref{fig:heavy-tailed-t3} and \ref{fig:heavy-tailed-t1.02} try to explain the reason why our algorithms have much better performance than the other algorithms in super heavy-tailed linear bandit problems. For every algorithm from MoM, CRT, MENU, TOFU, SupBMM and SupBTC, we transform it into an efficient algorithm for super heavy-tailed linear bandits by Algorithm~\ref{alg2}. The counterpart is named as the original name with suffix ``\_mom'', which represents our mean of medians estimator. Figure~\ref{fig:heavy-tailed-t3} considers Student’s $t$-noise with $\text{df}=3$ while Figure~\ref{fig:heavy-tailed-t1.02} focuses on more heavy-tailed case, where $\text{df}=1.02$. We notice that in Figure~\ref{fig:heavy-tailed-t3}, when $\text{df}=3$, all algorithms make an accurate estimation of $\theta^*$, thus perform well. However, in Figure~\ref{fig:heavy-tailed-t1.02}, when $\text{df}=1.02$, the estimation error of other algorithms seems to vary more and even not converge. While the counterparts by our algorithmic framework have estimation error approaching $0$ stably. In this way, no matter how heavy-tailed the noise is, as long as we choose $\tilde{n}$ large enough according to Theorem~\ref{thm:regret}, our algorithms will have comparably good performance.

What's more, we notice that in Figure~\ref{fig:heavy-tailed-t1.02}, performance improvement varies with different algorithms. For example, if we take TOFU and MENU as input algorithm $\cA$ in Algorithm~\ref{alg2} respectively, the performance of TOFU\_mom is not as good as MENU\_mom, even TOFU and MENU have comparable performance. In this way, we only adopt SupBMM\_mom and SupBTC\_mom for comparisons in Figure~\ref{fig:heavy-tailed-comparison} for better performance.





\section{The Selection of Parameter $\varepsilon$}

In this section, we further illustrate the selection of parameter $\varepsilon$.

First we discuss the choice of $\varepsilon$ with respect to $\alpha$. In order to approximate the optimal value of $\varepsilon$, according to Theorem~\ref{thm:regret}, we let $\big(16\log(2T/\delta)\bigr)^{1/\varepsilon} = (2\cdot4^{2/\alpha}\log(4/\delta))^{\frac{1}{1-\varepsilon}}$. Figure~\ref{fig:vareps-vs-alpha} shows the relationship between $\varepsilon$ and $\alpha$, where we set $\delta = 0.01$ and $T = 10000$.

\begin{figure}[tb]
  \caption{Optimal $\varepsilon$ with respect to $\alpha$.}
  \label{fig:vareps-vs-alpha}
  \centering 
  \includegraphics[width=0.9\textwidth]{./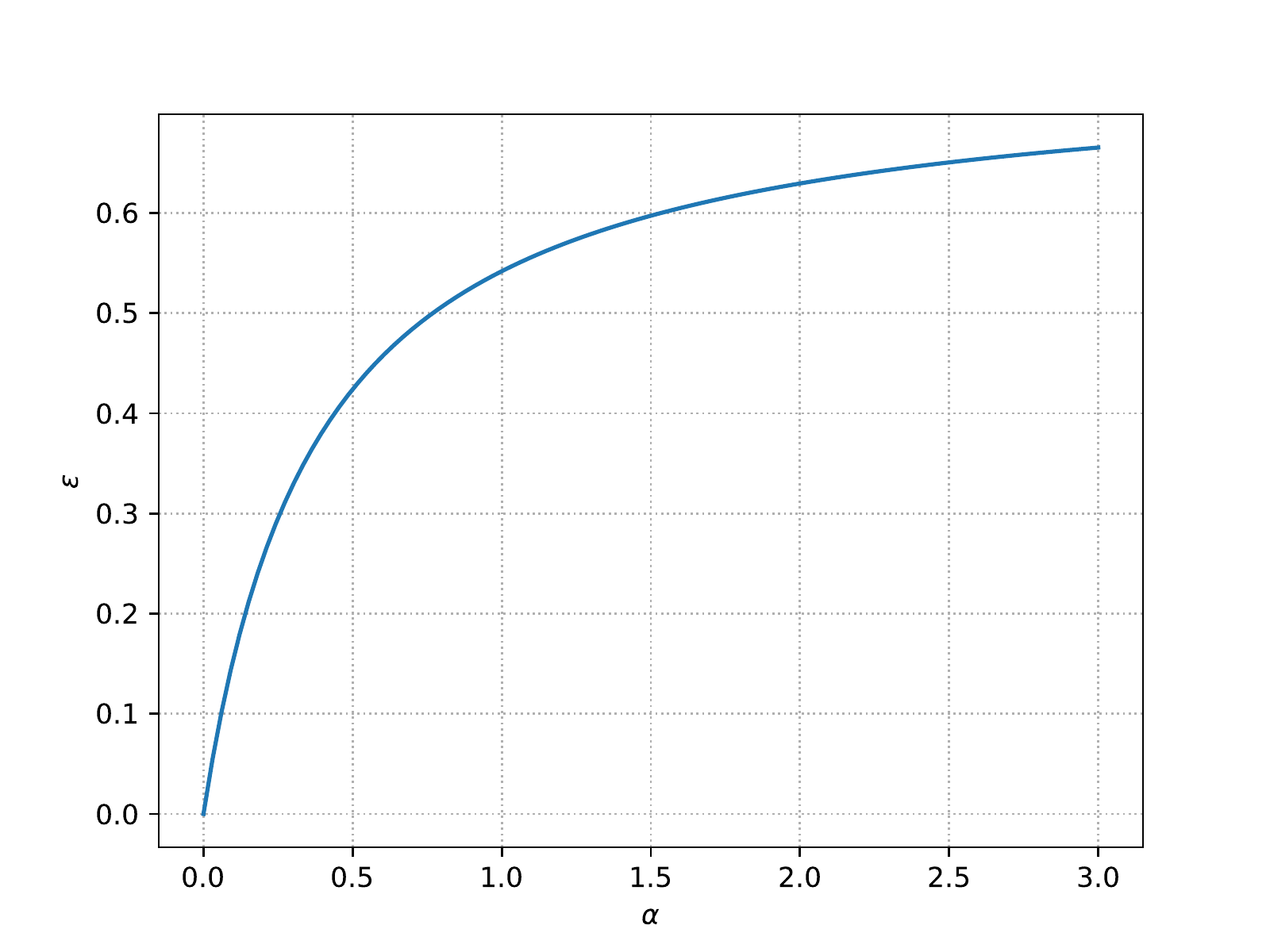}
\end{figure}

Then we concern about how sensitive our mom-algorithms are to the choice of $\varepsilon \in (0,1)$. Figure~\ref{fig:optimal-vareps} demonstrates the mean regret of 100 independent paths under Student's $t$-noise with $\text{df}=1 (\alpha=1)$, which corresponds to the setting of Figure~\ref{fig:super-heavy-tailed-comparison}(a) in our paper. Each sample path contains $10000$ iterations. We choose $\varepsilon \in [0.3, 0.8]$ to avoid extreme situation, i.e. we can't choose $\varepsilon$ close to $0$ or $1$.

\begin{figure}[tb]
  \caption{Mean regret of $100$ independent paths under Student's $t$-noise with $\text{df}=1$ by algorithm SupBTC\_mom.}
  \label{fig:optimal-vareps}
  \centering 
  \includegraphics[width=0.9\textwidth]{./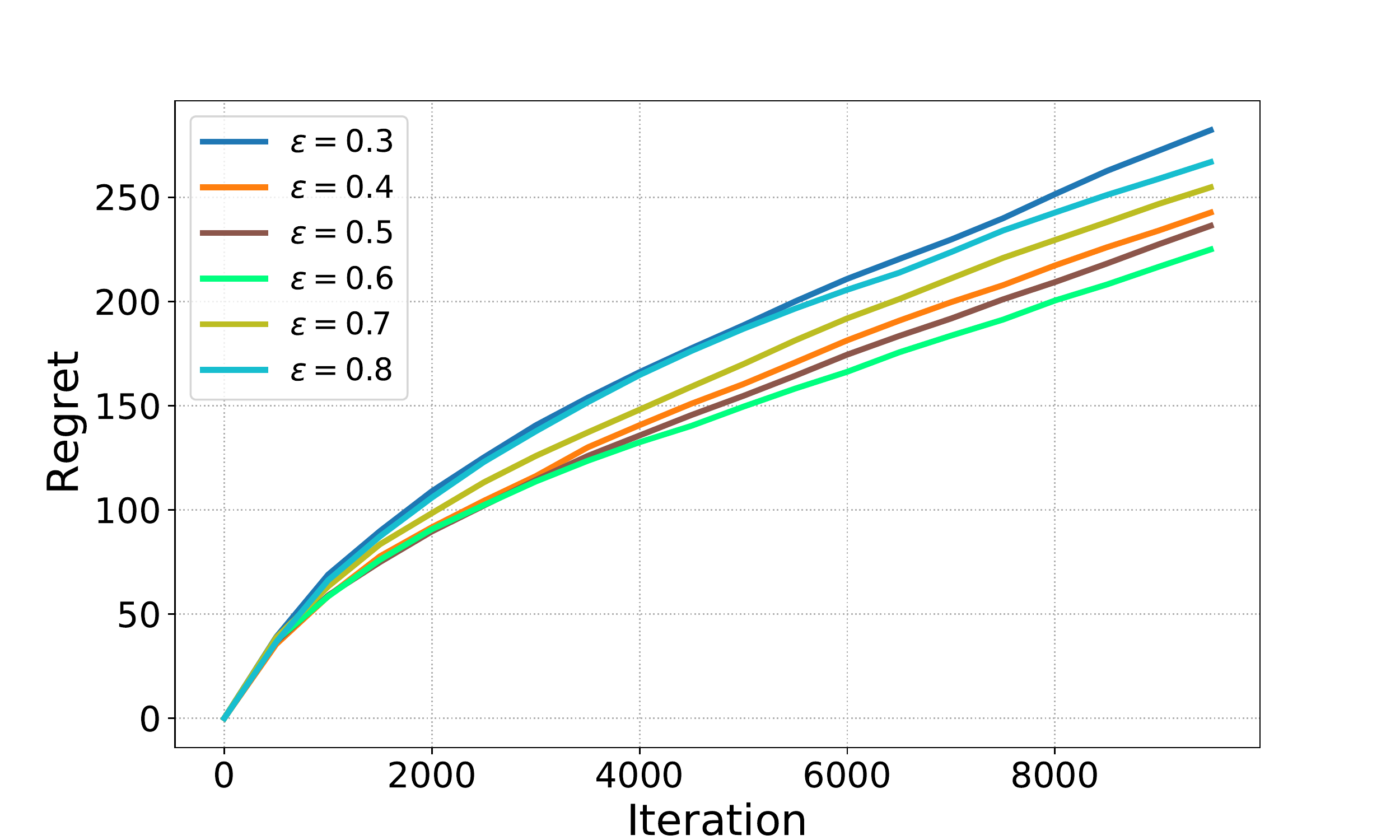}
\end{figure}

We observe that the optimal $\varepsilon$ is between $0.5$ and $0.6$, which is consistent with the result in Figure~\ref{fig:vareps-vs-alpha}.

\section{The Selection of Parameter $v$}

In the experiments, for the noise processed by our mean of medians estimator, we choose to tune the bound parameter $v$ satisfying $\EE[|\eta_\text{mom}|^2]\le v$ to ensure performance. In this section, we show that our algorithms are not sensitive to $v$ according to the plots.



Additional plots are provided here for further illustration. Under the same setting of Section~\ref{sec:experiment}, multiple independent paths are generated by algorithm SupBTC\_mom and SupBTC\_mom respectively. Figures~\ref{fig:supbmm} and \ref{fig:supbtc} shows the mean and median regret of $500$ independent paths under Student's $t$-noise with $\text{df}=0.5$. Each sample path contains $10000$ iterations. And three values of parameter $v$ are selected over a suitably large range. Figure~\ref{fig:supbmm} is for algorithm SupBMM\_mom and Figure~\ref{fig:supbtc} is for SupBTC\_mom. 

\begin{figure}[tb]
  \caption{Mean and median regret of $500$ independent paths under Student's $t$-noise with $\text{df}=0.5$ by algorithm SupBMM\_mom.}
  \label{fig:supbmm}
  \centering 
  \subfigure[Mean regret]{\includegraphics[width=0.49\textwidth]{./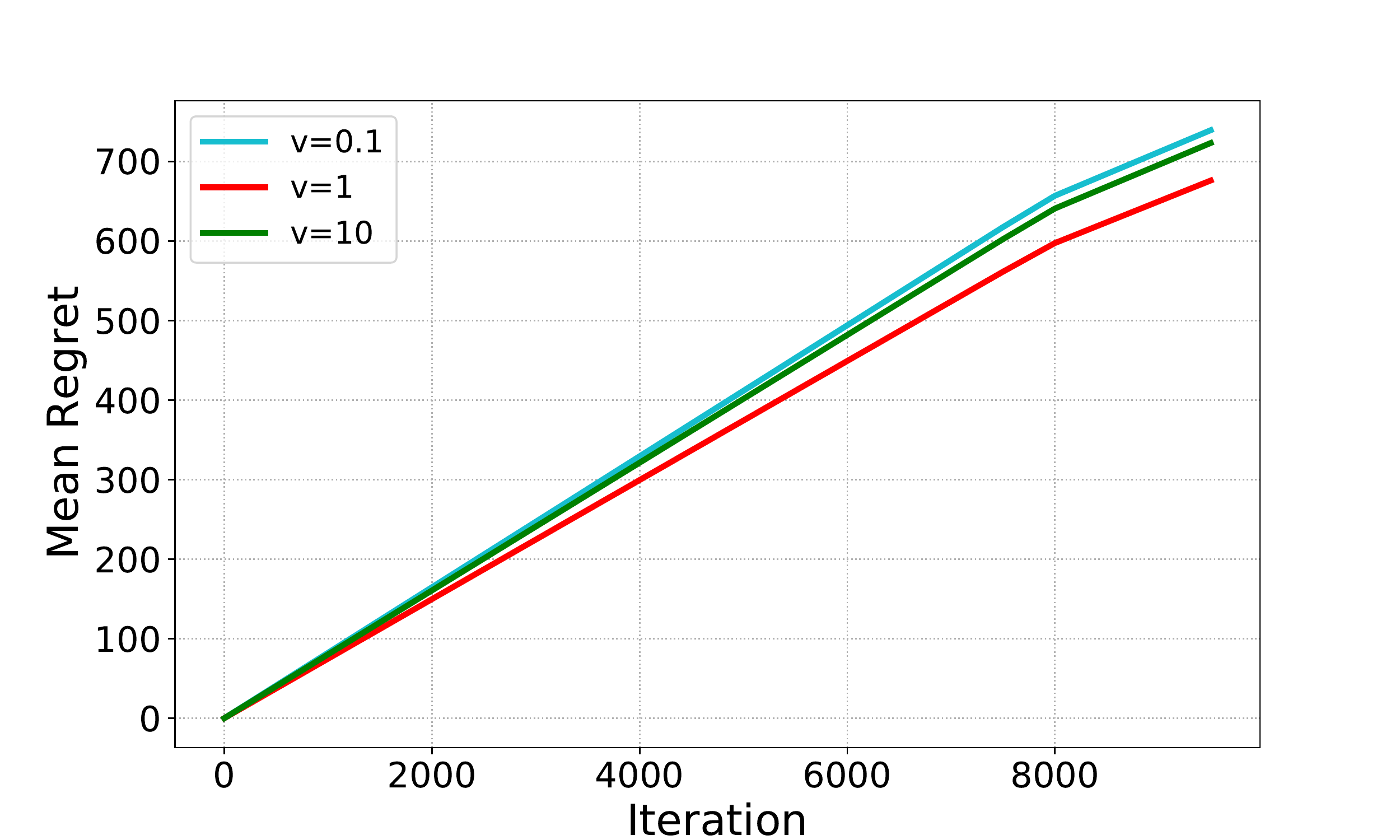}}
  \subfigure[Median regret]{\includegraphics[width=0.49\textwidth]{./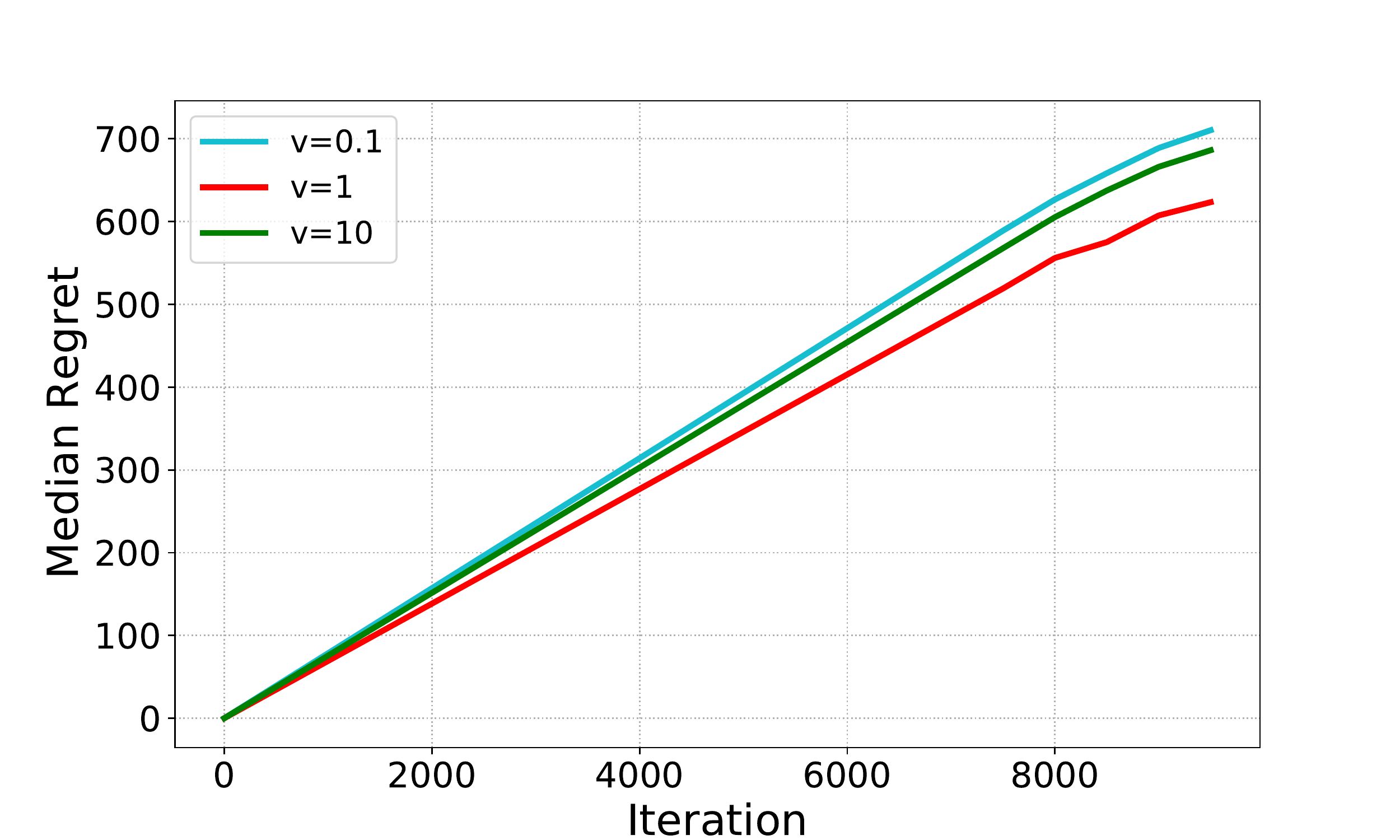}}
\end{figure}

\begin{figure}[tb]
  \caption{Mean and median regret of $500$ independent paths under Student's $t$-noise with $\text{df}=0.5$ by algorithm SupBTC\_mom.}
  \label{fig:supbtc}
  \centering 
  \subfigure[Mean regret]{\includegraphics[width=0.49\textwidth]{./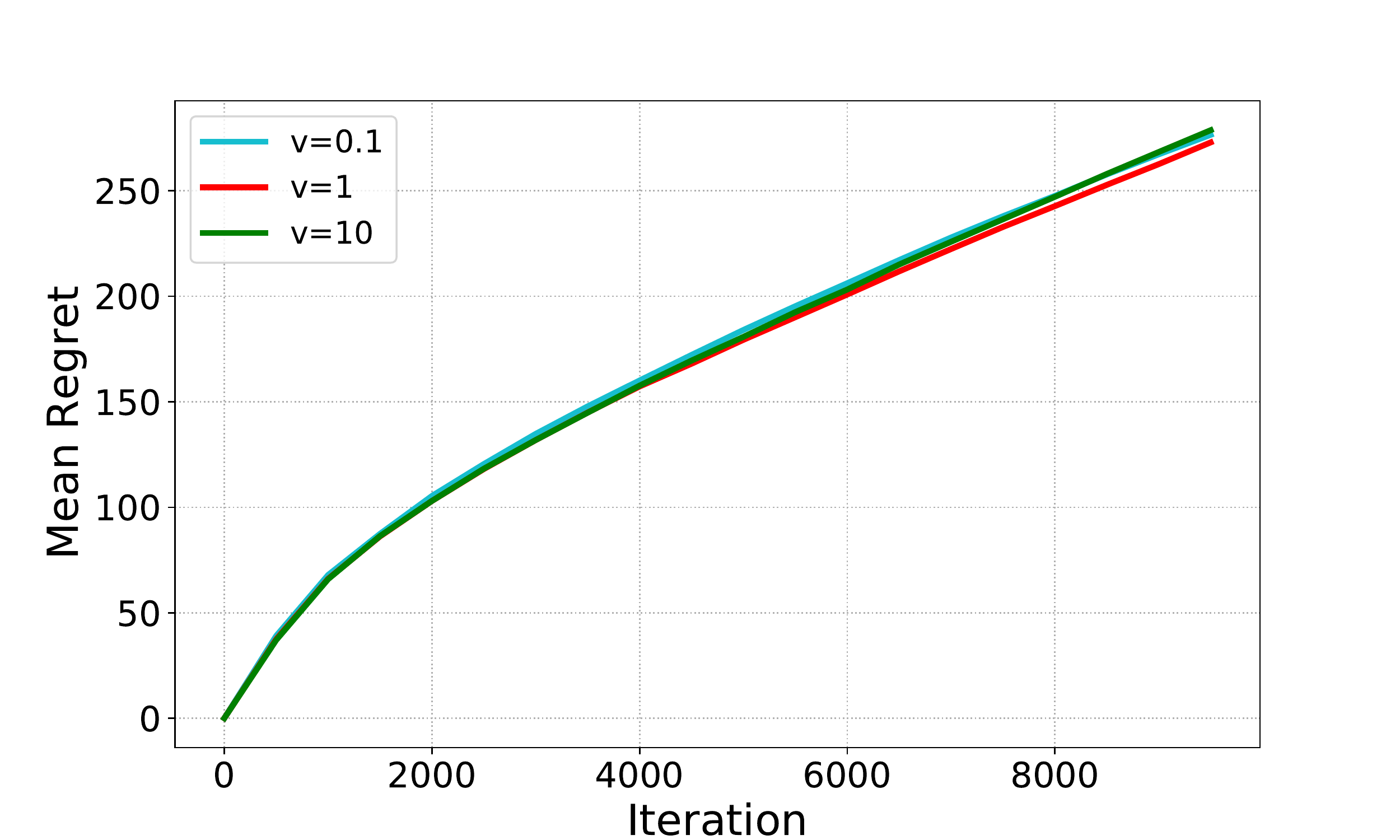}}
  \subfigure[Median regret]{\includegraphics[width=0.49\textwidth]{./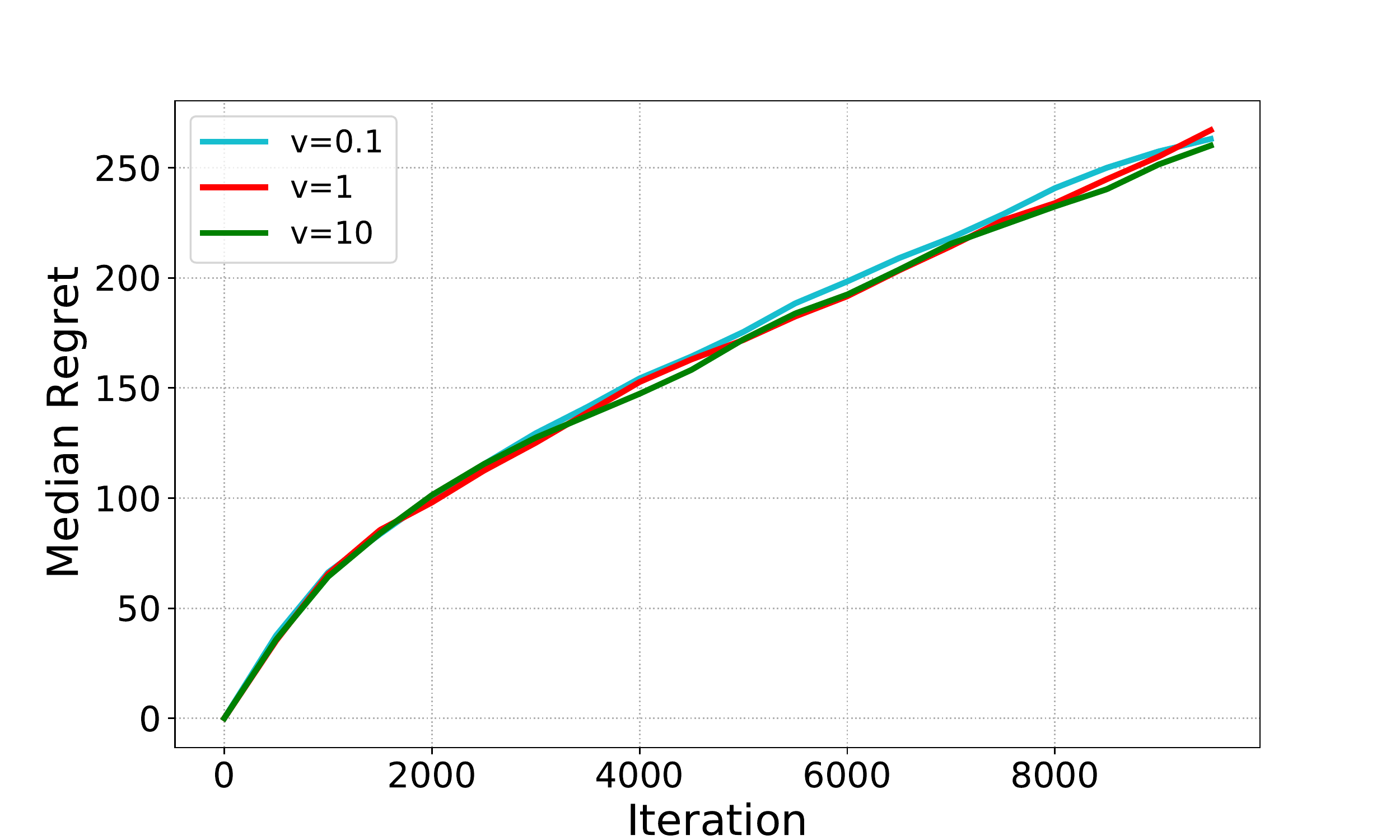}}
\end{figure}


\end{document}